\title{
Compositional Abstraction Error and a Category of Causal Models
}
\author[1]{\href{mailto:Eigil F.~Rischel <eigil.rischel@strath.ac.uk>?Subject=UAI 2021}{Eigil F.~Rischel}{}}
\author[2]{Sebastian Weichwald}
\affil[1]{%
    Department of Computer \& Information Sciences\\
    University of Strathclyde\\
    United Kingdom
}
\affil[2]{%
    Department of Mathematical Sciences\\
    University of Copenhagen\\
    Denmark
}
\tikzstyle{morphism}=[fill=white, draw=black, shape=rectangle]
\tikzstyle{medium box}=[fill=white, draw=black, shape=rectangle, minimum width=0.8cm, minimum height=0.9cm]
\tikzstyle{large morphism}=[fill=white, draw=black, shape=rectangle, minimum width=1.7cm, minimum height=1cm]
\tikzstyle{bn}=[fill=black, draw=black, shape=circle, inner sep=1.5pt]
\tikzstyle{state}=[fill=white, draw=black, regular polygon, regular polygon sides=3, minimum width=0.8cm, shape border rotate=180, inner sep=0pt]
\tikzstyle{medium state}=[fill=white, draw=black, regular polygon, regular polygon sides=3, minimum width=1.3cm, inner sep=0pt, shape border rotate=180]
\tikzstyle{large state}=[fill=white, draw=black, regular polygon, regular polygon sides=3, minimum width=2.2cm, shape border rotate=180, inner sep=0pt]
\tikzstyle{wn}=[fill=white, draw=black, shape=circle, inner sep=1.5pt]
\tikzstyle{large brace}=[draw, decorate, decoration={{brace,amplitude=12pt}}, inner sep=12pt, rotate=-90, xshift=34pt]
\tikzstyle{arrow}=[->]
\tikzstyle{dashed line}=[-, dashed]
\tikzstyle{double line}=[-, double]
\numberwithin{equation}{section}
\newtheorem{theorem}{Theorem}[section]
\newtheorem{proposition}[theorem]{Proposition}
\newtheorem{lemma}[theorem]{Lemma}
\newtheorem{definition}[theorem]{Definition}
\theoremstyle{definition}
\newtheorem{remark}[theorem]{Remark}
\newcommand{\tensor}{\otimes}
\DeclareMathOperator{\FinStoch}{FinStoch}
\DeclareMathOperator{\FinMod}{FinMod}
\DeclareMathOperator{\Err}{Err}
\DeclareMathOperator{\Met}{Met}
\DeclareMathOperator{\JSD}{JSD}
\newcommand{\doop}{\ensuremath{\operatorname{do}}}
\renewcommand{\phi}{\varphi}
\newcommand{\spc}{\mathbf}
\DeclareMathOperator{\pa}{pa}
\DeclareMathOperator{\nd}{nd}
\DeclareMathOperator{\error}{error}
\definecolor{colA}{HTML}{3e6cc0}
\definecolor{colB}{HTML}{c2630a}
\definecolor{colC}{HTML}{660066}
\definecolor{colD}{HTML}{00bc55}
\begin{document}

\bibliographystyle{plainnat}

\maketitle

\begin{abstract}
Interventional causal models describe several
    joint distributions over some variables used to describe a system,
    one for each intervention setting.
They provide a formal recipe for how to move between the different joint distributions
    and make predictions about the variables upon intervening on the system.
Yet, it is difficult to formalise
    how we may change the underlying variables used to describe the system,
    say 
    moving from fine-grained to coarse-grained variables.
Here, we argue that compositionality is a desideratum
    for such model transformations and the associated errors:
    When abstracting a reference model $M$ iteratively,
    first obtaining $M'$ and then further simplifying that to obtain $M''$,
    we expect the composite transformation from $M$ to $M''$ to exist
    and its error to be bounded by the errors incurred by
    each individual transformation step.
Category theory, the study of mathematical objects via
    compositional transformations between them,
    offers a natural language
    to develop our framework for model transformations and abstractions.
We introduce a category of finite interventional causal models
    and, leveraging theory of enriched categories,
    prove the desired compositionality properties for our framework.
\end{abstract}

\section{Introduction}

With a causal model we aim to
    predict future observations of a system
    under the same conditions that held true when we devised
    the model (observational distribution),
    or when the system
    is subjected to external manipulating forces (interventional distributions),
    or infer what the observations
    would have been like had the context been different (counterfactuals).
Following the common aphorism ``all models are wrong'',
    we do not and cannot regard any causal model
    as a precise description of some system
    but instead as an approximate description that is
    pragmatically useful to us.
Besides the question of ``usefulness'',
    which inevitably depends on modelling goals,
    there is another one:
How well does our model approximate reality?

The ground-truth may be unattainable.
Therefore,
    an exact quantification and characterisation of how
    well a model describes reality may be beyond reach.
It is possible, however, to assess a model relative to another model.
Recent approaches aim to formalise this by considering the following question
    about transformations
    that link two models~\citep{rubenstein2017causal,beckers2019abstracting,beckers2019approximate}:
    How well does a transformed, simpler and higher-level causal model
    agree with or approximate another, more detailed and lower-level model?
An answer to this
    question is pragmatically useful.
It allows us
    to bound the error of a model used in practice
    relative to the corresponding most accurate 
    state-of-the-art reference model available
    (which substitutes ground-truth).
Based on the error relative to one reference model,
    we can make a principled decision between candidate
    causal models of varying aggregation level
    and degree of simplification.

Model development is an iterative process.
For example, specialists may propose approximations
    and transformations to only parts of a complex causal model,
    which we incorporate in a joint model
    and which we may refine further as we gain new
    insights.
It is desirable that an account of model transformation
    facilitate such modular step-by-step simplifications.
Existing work partly addresses this:
\citet[Lemma 5]{rubenstein2017causal} prove that the notion of exact transformations
is transitive: if $M$ can be exactly transformed into $M'$,
and $M'$ into $M''$, then the transformation from $M$ to $M''$ is also exact;
\citet[Theorem 3.9]{beckers2019abstracting} prove an analogous result
for a stricter notion of transformations;
and \citet[Section 5]{beckers2019approximate} discuss
the problem of
composing approximations and abstractions, instead of exact transformations,
in different order.

Complementing this line of research,
    we argue that a key desideratum
    for an account of model abstraction
    is
    that the measure of error be compositional.
We introduce a framework based on a category of interventional causal models,
    as category theory is a natural language for
    the study of models and compositional transformations between them.
Extending the result of 
\citet{rubenstein2017causal} to in-exact transformations,
    we prove that our proposed error notion is compositional:
    when we approximately transform $M$ into $M'$, and $M'$ into $M''$,
    then an approximate transformation $M$ to $M''$ exists
    and its error
    admits a natural bound in terms of the error of the two component transformations.

The categorical perspective is instructive and establishes
    an interdisciplinary common ground between applied category theory and
    causal modelling.
We prove the well-definedness of interventional distributions
    by string diagram surgery,
    thereby reproducing classical results in our category theoretical framework.
In the following subsections,
    we discuss
    conceptual issues in causal modelling
    that motivate our framework and can be addressed and formalised within it.
In \cref{sec:causalmodels}, we introduce our compositional framework,
    discuss design choices and its desirable features.
We discuss how 
    a notion of implemented models
    may enable us to reasonably talk about
    bounding the error of a causal model
    relative to ground-truth
    and based on interventional experiments in \cref{sec:functors}.
\cref{sec:discussion} concludes our article.

\subsection{Variables in statistical models}\label{sec:variables}

In probabilistic models,
the state of a system is represented by random variables.
The values are
determined by some measuring procedure.
For example, we may model the daily
warmth situation in an enclosed space
using the following variables:
$A_1$ and $A_2$ denoting the settings of two air conditioners (ACs) as transcribed to our spreadsheet
at 8 o'clock in the morning,
and $T$ denoting the reading of a room thermometer at noon.
Via a model over $(A_1,A_2,T)$,
we may
predict the AC configuration $(A_1,A_2)$ given $T=t$.

We often decide implicitly
which variables to use to describe a system.
This decision is constrained by measurement ability and accuracy,
computational and storage constraints,
and pragmatic considerations.
Yet, the choice is not arbitrary
and we do rely on our descriptors to capture relevant aspects of reality.
For example, to determine the value of $T$
we may choose to measure a proxy of
the average velocity of gaseous particles in a room with a thermometer,
reading it off at 8:01:32 every day, and rounding to 2 digits.
By doing so, we implicitly decide against
measuring all particles' momenta and including billions of variables,
and we fix which world states are (to be) considered indistinguishable for our modelling purposes.
    To make a conscious
    informed decision on the
    measurement protocols 
    and variables to use,
we need to understand how models using
    different variables relate,
    which is the subject of this article.

\subsection{Interventions in Causal models}\label{sec:interventions}

Interventional causal models
make predictions about $A_1,A_2$
not only for given/observed $T=t$ 
but also for situations when we manipulate
the value of $T$.
In our example,
we may write
$\doop(T=t)$
for an intervention
indicating that the value $t$ of $T$
did not come about naturally by following the measurement procedure outlined above,
but instead was set externally.
The intervention $\doop(T=t)$ may correspond, for example, to a situation where we
a)~entered the room shortly before noon
and lit a lighter next to the thermometer before
its reading was obtained, or
b)~taped the needle so the thermometer would have a fixed reading, or
c)~changed for a digital thermometer with a zero-digit precision display.

How a causal model relates to the tangible world
    thus depends not only on the measurement procedures
    to obtain the variable values
    but also on the physical implementations of interventions.
Both 
    determine how our model variables,
    which serve as explanans in our causal description,
    relate
    to reality or a state-of-the-art reference model
    instead of
    the unattainable ground-truth.
There are multiple measurement procedures
    to obtain the variables' values.
Additionally, the 
    physical procedure corresponding
    to an intervention $\doop(T=t)$ in our model is ambiguous.
For our causal model transformations,
    we thus require the modeller to make the
    content of a causal model that is to be preserved explicit
    and to specify
    how observations and interventions map between
    the transformed model and the reference model.

\subsection{Ambiguous interventions}\label{sec:ambiguousmanipulations}

It is problematic and unrealistic
    if—as is customary—the
    predictions a model makes about the effects of interventions
    do not depend on how the intervention is being implemented~\citep{spirtes2004causal}.
Even if interventional data were available,
    the definition of causal variables may be underdetermined~\citep{eberhardt2016green}.

Therefore, we require the modeller to make explicit
    how high-level interventions are implemented on the low-level.
Ambiguity in the low-level implementation of interventions
    is encoded in an intervention kernel.
For example,
we can encode if we only ever intervened on total cholesterol (TC) by prescribing a certain restricted set of diets
with 
comparable effects on the level of low-density (LDL) 
and high-density lipoprotein (HDL).
Whether we allow only a restricted set of diets or any diet
carries over, via the intervention kernel,
to a lower or higher abstraction error when transforming a model
and replacing LDL and HDL by TC.
\subsection{Why compositionality?}\label{sec:motivation}

Without compositionality,
    we cannot assess a model's error by comparing against the predecessor model,
    but instead need to evaluate each model relative to the reference model.
The following example illustrates how this lack of modularity 
    deceives us when iteratively simplifying and
    abstracting different parts of a model.
To compare the interventional distributions implied by the transformed model
    to those of the reference model,
    we here use the KL-divergence
    as a common measure of how
    one distribution differs from a reference distribution.
    We motivate why an information theoretic measure should be used in \cref{sec:assessing}.

Imagine a model $M$ of two ACs affecting the temperature in a room. 
The variables are $A_{1},A_{2}$ for the ACs settings,
    and $T$ for the temperature.
The causal structure is \(A_1 \to T \leftarrow A_2\), and we define the mechanisms by stating the respective kernels.
Assume both ACs have settings \(\{1,\dots, 100\}\)
and 
the temperature scale is
\(\{1,\dots, 100\}\).
Suppose that
\begin{itemize}
\item \(P(T=n \mid A_1=100, A_2=100) = Z / n^2\),
\item \(P(T=n \mid A_1=100, A_2\neq 100) = Z' / n^3\),
\item \(P(T=n \mid A_1\neq 100) = Z'' / 100^n\),
\item the settings of the ACs are independently and uniformly distributed
in the observational setting, and
\item $Z,Z',Z''$ are normalising constants.
\end{itemize}

Consider an abstraction $M'$ of $M$,
where the influence of the AC $A_2$ is
simplified away, that is,
we remove the arrow \(T\gets A_2\).
Define the simplified model such that the distributions,
    $P(T \mid A_{1}), P(A_{1}), P(A_{2})$
    agree with the reference model $M$ as closely as possible.
As the simplified model requires that $T$ and $A_{2}$ be independent,
    the two models disagree on $P(T \mid A_{2})$.
We compare the two models' predicted distribution for
    $T\mid\doop(A_1=100,A_2=100)$.
A calculation shows that the KL-divergence between the two models' interventional distributions
    is a fairly small \(0.22\).

Consider another abstraction \(M''\) of $M'$
where the influence of the AC $A_1$ is also
simplified
away.
The KL-divergence between the predictions of \(M'\) and \(M''\)
on $T\mid\doop(A_1=100,A_2=100)$
is
again
a fairly small \(0.39\).

We may expect that
    abstracting $M$ as $M''$ should be permissible,
    since both abstractions, from $M$ to $M'$ and from $M'$ to $M''$ were permissible.
While small errors may accumulate,
    we expect
    the error of the abstraction $M$ to $M''$
    to be bounded in terms of the errors of abstracting $M$ as $M'$ and $M'$ as $M''$.
One way to formalise this expectation is
    to impose the triangle inequality
    such that the transformation error of $M$ to $M''$ is bounded by \(0.39 + 0.22 = 0.61\).
The KL-divergence between the predictions of \(M\) and \(M''\), however,
    is more than twice that: \(1.52\).
For $\doop(A_1=100,A_2=100)$,
$M$ predicts a temperature above $1$ in $~39\%$ of cases,
while $M''$ predicts a temperature above $1$ only in $~0.01\%$
of the cases.
The errors of the individual abstraction steps
    are not indicative
    of how well the abstracted model $M''$
    approximates the reference model $M$.
The discrepancy between the individual and overall abstraction error is unbounded;
    in \cref{appendix_kl_counterexample}.
    we construct
    for any \(\epsilon, K > 0\),
    a situation
    where the individual abstractions incur KL-divergences \(\leq \epsilon\),
    while the overall abstraction incurs a KL-divergence \(> K\).

This is a severe hurdle to the development of causal models.
We often do not have access to ground truth but only have a model we think is reasonably accurate
    and are considering replacing it with another, perhaps simpler, one.
If the errors are not compositional, as in the above example,
    we cannot ensure that
    the abstracted model $M''$ closely approximates the reference model (or ground-truth) $M$
    by enforcing a small approximation error relative to a previously
    established good approximation $M'$.
This limits how informative the efforts to develop $M'$ and to empirically validate
its close resemblance of $M$ are about $M''$
(see also~\cref{sec:functors}).

Therefore, compositionality is a desideratum
    for our notion of causal model abstraction.
Our framework is compositional:
    the errors of the two abstractions above
    are $0.2$ and $0.22$,
    and the composite abstraction error is $0.37 \leq 0.42$.

    While abstraction examples similar to those discussed by \citeauthor{rubenstein2017causal,beckers2019abstracting,beckers2019approximate}
    can be expressed within our framework,
    the above example -- where the ``simplification'' consists of
    deleting causal arrows, rather than, for example, reducing the number of variables --
    is instructive to
    exhibit the failure of compositionality.

\subsection{assessing model abstraction error}\label{sec:assessing}

Often, we cannot establish an exact correspondence
    between two modelling levels.
The conditions
    for 
    the transformations and abstractions discussed
    by \citet{rubenstein2017causal} and \citet{beckers2019abstracting},
    and for modelling equilibria of a time-evolving
    process~\citep{janzing2018structural} are restrictive.
Therefore, we also wish
to characterise
    transformations in which the
    transformed model only approximates the
    causal relationships in the reference model.
Still, 
    we require the two models,
    where one can be viewed as a transformation of the other,
    to stand in a well-defined relationship.
A natural idea is to ask for an {approximate} transformation
    that preserves the causal structure \emph{up to some error}.
This idea has been proposed, for instance, by \citet{beckers2019approximate}.

In 
\citeauthor{beckers2019approximate}'s approach,
    the measure of
    abstraction error
    ultimately depends on a choice of metric on the underlying set of outcomes.
The advantage of such an approach is obvious:
it allows us to adjust the metric to optimally capture those aspects
of the model that are of interest.

We see two problems with this approach.
First, choosing a metric on the set of {high-level} variables
    requires that we already have chosen that abstraction level.
In order to assess an abstraction we are required
    to decide how important the high-level variables we have just invented are.
This
    is an unnatural requirement
    for comparing and finding candidate abstractions.
Second,
    having to choose a metric at all
    requires detailed knowledge about the system
    and the model's intended use.
If we know which task we are solving with a model, however,
    we can instead assess the abstracted model directly and evaluate
    the actions it recommends for this task~\citep{kinney2020causal}.
We need not require that the model
    approximates some reference model as long as
    it is useful to solve the given task.
If, instead, we do not know what the abstracted model is to be deployed for,
    we need to revert to an error measure
    that rates models
    by their resemblance of a reference model
    such that we 
    can select models that are useful for a wide range of tasks.
Our information theoretic error measure evades the arbitrary choice of a metric
    on the outcome space.

\subsection{A categorical approach
and compositional transformations}
    
We propose \emph{compositionality} as a desideratum for notions of error
    for transformations between causal models
        (cf.\ \cref{sec:motivation}).
In our framework,
    the error of a composite transformation 
    can be bounded in terms of the error of the component transformations:
        if $f: M \to M'$ and $g: M' \to M''$ are transformations between causal models
        and $e,e'$ are the corresponding errors,
        then the composite transformation
        $g \circ f: M \to M''$ exists and has error at most $e + e'$.

We propose the language of \emph{category theory}
    to discuss abstractions
    with a focus on compositionality.
Category theory 
    studies mathematical objects in terms of the compositional transformations between them.
A \emph{category} $\mathcal{C}$ consists of
\begin{enumerate}
\item a collection of \emph{objects}, usually written $\operatorname{ob} \mathcal{C}$,
\item for each pair of objects $X,Y$, a collection of \emph{morphisms} $f: X \to Y$,
\item a notion of associative and unital \emph{composition},
    assigning to each pair of morphisms $f: X \to Y$, $g: Y \to Z$
    a composite $g \circ f: X \to Z$, and
\item for each object $X$, an identity morphism $1_{X}: X \to X$.
\end{enumerate}

For example,
    there is a category where the objects are vector spaces,
    the morphisms are linear functions,
    and composition is composition of functions.

    A category is
    an abstract
    mathematical structure,
    where ``object'' and ``morphism''
    label two parts of that structure.
    For example,
    we can
    define a category
    with two objects $\{\bigstar,\clubsuit\}$,
    the identity morphisms $1_\bigstar: \bigstar \to \bigstar$ and $1_\clubsuit: \clubsuit\to \clubsuit$,
    and one morphism $f: \bigstar \to \clubsuit$.
    The notation $f: X \to Y$
    is overloaded and implies that $f$ is some morphism between the objects $X$
    and $Y$,
    rather than a \emph{function} $f$ from $X$ to $Y$.

To establish an interdisciplinary meeting ground,
    we introduce the categorical concepts
    on a level of detail
    necessary to understand and gain intuition about our framework.
See, for example,
    \citet{maclane} or \citet{riehl2017category},
    for comprehensive introductions to category theory.

\section{Causal Models} \label{sec:causalmodels}

To ease the exposition of
our categorical framework, we
briefly recap the conventional introduction
of causal models and their key properties
in \cref{sec:conventional}.
In \cref{sec:category},
we introduce a category of finite
interventional causal models,
where the objects are
causal models over finitely many variables with finite outcome spaces.
The morphisms in this category
are model transformations with an associated error
which we formalise via an enriched category
and prove to be compositional in 
\cref{sec:compositional}.

\subsection{Recap: Structural Causal Models}\label{sec:conventional}

For context,
we briefly introduce
Structural Causal Models (SCMs).
For details, we refer to, among others,
\citet{Spirtes2000,Pearl2009,bollen1989structural,peters2017elements}.

\begin{definition}[Structural Causal Model (SCM)]
Let $\mathbb{I}$ be an index set.
Let $E = (E_i:i\in\mathbb{I})$ be a collection of independent variables
with distribution $\mathbb{P}_E = \otimes_{i\in\mathbb{I}} \mathbb{P}_{E_i}$.
Let $\mathbf{S}$ be a set of structural equations
$$X_i = f_i(X_1,...,X_{i-1},E_i)$$
for $f_i: \prod_{j=1}^{i-1}\mathbf{X}_j \times \mathbf{E}_i \to \mathbf{X}_i$
measurable functions and $i\in\mathbb{I}$.
Let $\mathbf{I}$ be a set of interventions
which we denote by $\doop(X_{k_1} = x_{k_1}, ..., X_{k_l} = x_{k_l})$
for $k_1, ..., k_l \subseteq [d]$ and $x_{k_j} \in \mathbf{X}_{k_j}$
and which identify some structural equations in $\mathbf{S}$
to be replaced by the equations $\mathbf{X}_{k_j} = x_{k_j},\ j\in[l]$.

We call $(\mathbf{S},\mathbf{I},\mathbb{P}_E)$ a structural causal model.
\end{definition}

SCMs induce sets of distributions over $X = (X_i : i\in\mathbb{I})$.
The distribution $\mathbb{P}_X^\varnothing$
induced by $\mathbb{P}_E$ and the structural equations $\mathbf{S}$
is called the observational distribution.
Further distributions $\mathbb{P}_X^{\doop(i)}$ are obtained for each intervention $i \in \mathbf{I}$
by changing the respective structural equations according to $i$ and considering
the distribution induced by $\mathbb{P}_E$ and this new set of structural equations.
Thus, an SCM can be understood as a structured set of joint distributions
over $X$ where the distributions are indexed by
interventions.
The mental picture may be depicted as\\[-1.8em]
\centerline{\begin{tikzpicture}

\begin{axis}[hide axis,scale=.8]
\addplot[white]
table {%
34.002525	1.3731846
36.389637	1.4206893
39.168667	1.7745008
42.134747	2.7290497
45.092907	4.081451
48.238117	6.0345893
50.796448	7.478536
53.941658	9.431675
57.695522	10.799911
60.874386	11.062177
63.858276	11.121558
67.24794	10.791004
71.39471	10.415815
75.276146	10.373654
78.67897	10.381671
82.07467	10.747755
84.157455	11.08771
86.5327	11.731994
89.30787	13.279758
91.07445	14.5089445
92.53908	15.911225
93.94285	17.371998
95.390854	19.609768
96.245636	21.65663
97.20078	24.660711
97.69276	26.93916
98.01163	28.915655
98.267265	31.069996
98.04785	33.095478
97.58973	35.116215
96.64706	37.48551
95.65896	39.137493
94.07289	40.837273
92.00584	42.706585
89.7645	44.333622
87.26665	46.851078
85.06836	49.314793
83.526535	51.791573
81.404854	56.406067
80.01475	60.259
78.82386	63.10097
77.65198	64.9881
76.05404	67.28466
74.35336	68.74335
72.53097	70.31902
70.41968	71.41134
68.00762	72.617065
65.606255	73.2857
63.206078	73.894646
60.992058	74.14909
58.843655	74.10634
56.57827	73.94186
54.377316	73.53985
52.473564	73.20345
50.93263	72.635475
49.335575	71.887276
47.912807	71.38135
46.615334	70.57941
44.960983	69.71066
42.953312	68.596085
40.88715	67.42064
38.237278	65.57686
35.88698	63.679344
33.895935	61.72927
31.846401	59.718338
29.271643	57.099934
26.286266	54.11515
23.650642	51.555237
21.768267	50.14464
20.58896	49.404755
19.17213	48.60044
18.058441	47.56335
17.181087	46.65037
16.125887	45.67415
14.294876	44.682484
12.330257	44.404583
10.292896	44.781948
9.023035	45.5925
6.739837	46.32318
4.760967	46.761414
2.7319198	46.721035
1.5490512	46.16018
0.49741444	45.004925
-0.17715319	42.901947
-0.31818295	40.988693
-0.033154365	38.666008
0.6313185	35.276245
1.6992849	32.61093
4.3939924	29.20187
6.462231	27.27288
9.201178	24.64082
11.886384	21.70918
13.677611	18.700052
14.830195	14.78269
15.67489	11.336814
17.105673	8.439914
18.634436	6.6195908
20.757603	4.9304996
23.71507	3.3177128
27.201326	2.1333597
30.735384	1.5469728
32.70594	1.5264864
34.083282	1.3150902
};

\draw[black, fill=black] (axis cs:60.862015,26.68382) circle (2pt) node (0){};
\node  at (axis cs:69.00893,28.801174) {$\mathbb{P}^{\varnothing}_X$};

\draw (axis cs:22.86147,16.211173) circle (2pt) node (1){};
\node at (axis cs:21.846848,10.195957) {$\mathbb{P}^{\doop(i_1)}_X$};

\draw (axis cs:31.034046,38.289234) circle (2pt) node (2) {};
\node at (axis cs:18.57733,39.4891) {$\mathbb{P}^{\doop(i_3)}_X$};

\draw (axis cs:62.363655,56.226143) circle (2pt) node (3) {};
\node at (axis cs:61.132545,62.589703) {$\mathbb{P}^{\doop(i_2)}_X$};

\draw (axis cs:106.363655,16.226143) circle (2pt) node (4) {};
\node at (axis cs:98.132545,10.589703) {$\mathbb{P}^{\doop(i_4)}_X$};

\draw[->,dashed] (0) to (1);
\draw[->,dashed] (0) to (2);
\draw[->,dashed] (0) to (3);
\draw[->,dashed] (0) to (4);
\draw[->,dashed] (1) to (2);
\draw[->,dashed] (3) to (2);
\draw[->,dashed] (3) to (4);

\end{axis}

\end{tikzpicture}}\\[-1.5em]
The distributions model the system under different interventions that force
    certain variables to take on certain values.
The partial ordering of the distributions
    reflects the compositionality of interventions,
    which is crucial for
    causally consistent reasoning between two models~\citep{rubenstein2017causal}.
Acyclicity is a common assumption
and
    ensures that observational and interventional distributions are well-defined;
    cyclic models are intricate~\citep{bongers2018theoretical}.

\subsection{A category of finite interventional causal models}
\label{sec:category}

We begin by defining the objects of our category
$\FinMod$ of finite interventional causal models:

\begin{definition}[Finite interventional causal model]
  A \emph{finite interventional model} $M=(G^{M},\mathbf{X}^{M},\phi)$ consists of
  \begin{enumerate}
    \item a finite directed acyclic graph $G^{M}$,
    called causal graph of $M$, with vertices called \emph{variables} $V(M)$ of $M$,
    \item for each variable $v$, a finite set $\mathbf{X}^{M}_{v}$ of possible values of $v$, and
    \item for each variable $v$, a Markov kernel called \emph{mechanism}
          \[\phi^{M}_{v}: \prod_{v': v' \to v\text{ in }G^M} \mathbf{X}_{v'}^{M} \to \mathbf{X}^{M}_{v}.\footnotemark[1]\]
  \end{enumerate}
  For each root node $v$, there is a kernel $\phi_{v}^{M}: * \to \mathbf{X}^{M}_{v}$,
  that is, a probability distribution on $\mathbf{X}^{M}_{v}$.
\end{definition}

\footnotetext[1]{
In other words, a function $\prod_{v' \to v}\mathbf{X}^M_{v'} \to \Delta(\mathbf{X}^{M}_{v})$.\\
Since we do not consider counterfactuals in the present article,
it is sufficient to
specify these kernels instead of
functional equations and distributions on the exogenous variables.
}

A finite interventional causal model
    induces distributions:
    
\begin{definition}[Interventional distributions]\label{def:interventionaldists}
  Given a subset $S\subseteq V(M)$ of the variables in a model $M$
  and corresponding values $x_{v},v\in S$,
  there is a well-defined \emph{interventional distribution}, a kernel
  \[I_{S}: \prod_{v \in S} \mathbf{X}^{M}_{v} \to \prod_{v \in V({M})} \mathbf{X}^{M}_{v},\] 
  determined by the condition 
  that for $v \in S$, $I_{S}(x)_{v} = x_{v}$ with probability $1$,
  and the conditional distribution of each \emph{other} variable $v\centernot\in S$,
  given its parents,
  is given by the mechanism $\phi_{v}^M$.
  
  When $S$ is empty,
    $\prod_{v\in S}\mathbf{X}^{M} = *$,
    we obtain the \emph{observational distribution}
    as the joint distribution over all variables
    under the null intervention.

  We may write $P(- \mid \doop(v = x_{v}, v \in S))$
  for
  $I_{S}((x_{v})_{v \in S})$.
\end{definition}

Acyclicity of the causal graph ensures that the distributions
    in \cref{def:interventionaldists} are well-defined.
To establish the parallels between the categorical and the classical perspective,
    we (re-)prove this result using string diagram surgery
    in \cref{sec:diagrammatic_proofs}.
String diagrams are
widely used in category theory to depict constructions in \emph{monoidal categories}
such as $\FinStoch$~\citep{fritz2019synthetic}.
The proofs rely on rewiring diagrams such as the following
(read bottom-to-top)\\[.5em]
\centerline{\scalebox{0.7}{\tikzfig{figures/varswap_rightmost}}}\\[.5em]
without changing the resulting distribution;
above diagram, for example, depicts a kernel
\({\color{colA}\mathbf{X}_X^{M}} \to  {\color{colB}\mathbf{X}_y^{M}} \times {\color{colB}\mathbf{X}_X^{M}} \times
{\color{colB}\mathbf{X}_{y'}^{M}}\),
informally described as ``given \(x \in \mathbf{X}_X^M\),
sample \(y \in \mathbf{X}_y\) from the distribution
\(\phi^M_y(x)\)
and independently sample \(y'\) from the distribution
\(\phi^M_{y'}(x)\), then return the tuple \((y,x,y')\)''.

Next, we define
    the morphisms in $\FinMod$:

\begin{definition}[Model transformation]
  A \emph{transformation of models} $f: M \to M'$ consists of
  \begin{enumerate}
    \item a surjective \emph{vertex} map $f_{V}: V({M}) \to V({M'})$,
    \item for each $v \in G^{M'}$, a \emph{measurement} function
    \[f^{m}_v: \prod_{f_{V}(v') = v}\mathbf{X}^{M}_{v'} \to \mathbf{X}^{M'}_{v}\text{, and}\]
    \item for each $v \in G^{M'}$, an \emph{intervention} kernel
    \[f^{i}_{v}: \mathbf{X}^{M'}_{v} \to \prod_{f_{V}(v')=v} \mathbf{X}^{M}_{v'}.\]
  \end{enumerate}

\end{definition}

A model transformation is interpreted as follows:
  \begin{enumerate}
    \item Each high-level\footnote{When $f: M \to M'$,
    we may, for enhanced intuition, think of the models $M$ and $M'$ as ``low-level'' and ``high-level'', respectively.
    It is not required that $M'$ be more high-level than $M$.} variable $v \in V({M'})$ in $M'$ abstracts over
    a set
    $f_{V}^{-1}(\{v\})$
    of low-level variables in $M$.
    
    \item The high-level observation of $v$
    is determined by the 
    values $x_{v'}$ of the low-level variables $v'$
    via $f^m_v(x_{v'})$.
    
    \item For each intervention $\doop(v=x_v)$ on high-level variables $v$,
    there is a \emph{distribution} $f^i_v(x_v)$
    of corresponding interventions on the low-level variables $f_{V}^{-1}(\{v\})$.
  \end{enumerate}
  To sum up, $f^{m}_{V}$ is a map \emph{from low- to high-level} \textbf{m}easurements,
  while $f^{i}_{V}$ is a map \emph{from high- to low-level} \textbf{i}nterventions.

This notion of model transformation satisfies the desiderata:
\begin{itemize}
\item Variables in the high-level model are explicitly defined relative
    to the reference model (cf.~\cref{sec:variables}).
    
\item The relation of high-level interventions 
    to interventions in the 
    reference model
    is explicit
    (cf.~\cref{sec:interventions}).

\item
    The intervention kernel explicitly
    captures the uncertainty in the low-level implementation
    of high-level interventions,
    enhancing transparency and allowing for the error due
    to intervention ambiguity to be part of
    our error when assessing the abstraction:
    if a high-level intervention is related to
    low-level interventions with substantially
    different effects we expect a large approximation error
    (cf.~\cref{sec:ambiguousmanipulations}).
\end{itemize}

The last point bears additional explanation.
\citet{rubenstein2017causal} argue for restricting the intervention set
    on the low-level to enable any simplification at all
    when moving to a higher-level model.
While this approach is also transparent about which content of
    a causal model is to be abstracted away,
    the generalisation proposed by \citet{beckers2019approximate}
    offers a practical advantage.
While ``setting the temperature to $t$'' in a high-level model
    may be implemented by multiple configurations
    of all gaseous particles' velocities,
    not all low-level implementations are possible or equiprobable.
It may be impossible, for instance,
    that the temperature be raised by imparting an absurdly high velocity
    to a single molecule while leaving the others unchanged.
Following \citet{rubenstein2017causal},
    we may remove such interventions from the set
    of valid interventions.
Yet, their approach cannot encode
    that among all possible low-level configurations
    with the same average velocity,
    some are more probable than others;
    especially if only certain actionable interventions,
    such as setting several ACs in a room,
    are considered on the higher-level.
We therefore follow \citet{beckers2019approximate} and
    demand high-level interventions to be linked to low-level interventions
    by an intervention kernel instead of a one-to-many mapping.

\subsection{Compositional error}\label{sec:compositional}

We have not yet imposed any compatibility between the distributions induced
    by the high-level and the low-level causal model.
We develop the notion of transformation error
    to reflect the level of agreement between two models
    with a morphism between them.
The notion depends on the Jensen-Shannon divergence and the
category $\FinStoch$ of kernels between finite sets.
All proofs can be found in \cref{sec:proofs}.

\begin{definition}[$\FinStoch$ (see also \citet{fritz2019synthetic})]
  Let $\FinStoch$ be the category where
  \begin{enumerate}
    \item objects are finite sets,
    \item a morphism $f$ from $\mathbf{X}$ to $\mathbf{Y}$ is a kernel,
    that is, a map from $\mathbf{X}$ to the set $\Delta(\mathbf{Y})$ of probability distributions on $\mathbf{Y}$,\footnote{One can think of a morphism $f: \mathbf{X}\to\mathbf{Y}$ in $\FinStoch$
    as a stochastic matrix where
    entries reflect the probabilities to transition from $x\in\mathbf{X}$ to $y\in\mathbf{Y}$.},
    and
    \item composition is by integration: if $f: \mathbf{X} \to \mathbf{Y}$ and $g: \mathbf{Y} \to \mathbf{Z}$ are kernels, their composition $g\circ f$ is
          \[(g \circ f)(x) = \int_{y\in\mathbf{Y}} g(y) \mathop{}\!\mathrm{d}(f(x))(y) \in \Delta(\mathbf{Z}).\]
  \end{enumerate}
  $\FinStoch(\mathbf{X},\mathbf{Y})$ denotes the set of kernels from $\mathbf{X}$ to $\mathbf{Y}$.
\end{definition}

To measure the error introduced by a causal model transformation,
    we define a distance between probability measures
    based on the \emph{Jensen-Shannon divergence} (JSD)~\citep{JSDES}:
\begin{definition}[Jensen-Shannon divergence (JSD)]
  Let $p_{0},p_{1}$ be distributions on a finite set $\mathbf{X}$.
  Let $B$ be a random variable with $P(B=0) = P(B=1) = 1/2$, that is, a fair coin flip.
  Let $X$ be a random variable, valued in $\mathbf{X}$, with $P(X=x \mid B=i) =p_{i}(x)$.
  The Jensen-Shannon divergence $\JSD(p_{0},p_{1})$ is defined as the mutual information between $B$ and $X$.
\end{definition}

Intuitively, JSD answers the following question:
If we learn the value $x$ of $X$, 
    sampled either from $p_{0}$ or $p_{1}$,
    how much information does $X=x$ reveal
    about which of the two distributions it was sampled from?
Based on the JSD,
    we define a distance between probability measures
    as follows:
    
\begin{definition}[Jensen-Shannon distance]
  For $f,g\in\FinStoch(\mathbf{X},\mathbf{Y})$,
  the \emph{Jensen-Shannon distance}
  is defined as
  \[d_{\JSD}(f,g) = \sup_{x\in \mathbf{X}} \sqrt{\JSD(f(x),g(x))} \leq 1.\]
$d_{\JSD}$ is a metric on $\FinStoch(\mathbf{X},\mathbf{Y})$.
\end{definition}

\begin{proposition}[kernel composition is short]\label{prop:short}
  The composition of kernels
  \[\FinStoch(\mathbf{X},\mathbf{Y}) \otimes \FinStoch(\mathbf{Y},\mathbf{Z}) \to \FinStoch(\mathbf{X},\mathbf{Z})\]
   is a short map, that is,
  for any $f_1,f_2\in\FinStoch(\mathbf{Y},\mathbf{Z})$,
  $g_1,g_2\in\FinStoch(\mathbf{X},\mathbf{Y})$
  it holds that
  \[d_{\JSD}(f_{1}\circ g_{1},f_{2}\circ g_{2}) \leq d_{\JSD}(f_{1},f_{2}) + d_{\JSD}(g_{1},g_{2}).\]
\end{proposition}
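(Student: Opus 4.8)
The plan is to leverage that $d_{\JSD}$ is already a metric and reduce the bifunctorial estimate to two separate one-variable shortness statements via the triangle inequality. Concretely, I would interpolate through the ``mixed'' kernel $f_2\circ g_1$ and write
\[
d_{\JSD}(f_1\circ g_1, f_2\circ g_2) \leq d_{\JSD}(f_1\circ g_1, f_2\circ g_1) + d_{\JSD}(f_2\circ g_1, f_2\circ g_2).
\]
It then suffices to prove (i) precomposition with a fixed right-hand kernel is short, $d_{\JSD}(f_1\circ g_1, f_2\circ g_1)\leq d_{\JSD}(f_1,f_2)$, and (ii) postcomposition with a fixed left-hand kernel is short, $d_{\JSD}(f_2\circ g_1, f_2\circ g_2)\leq d_{\JSD}(g_1,g_2)$. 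Since $d_{\JSD}$ is a supremum over the input, in each case I would fix an input $x$, bound the $\JSD$ of the relevant output pair, and only take the supremum at the end.

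For (ii) I would use the mutual-information definition of $\JSD$ together with the data-processing inequality. Fixing $x$, let $B$ be a fair coin and let $Y$ be sampled from $g_1(x)$ or $g_2(x)$ according to $B$, so that $\JSD(g_1(x),g_2(x)) = I(B;Y)$. Passing $Y$ through the fixed channel $f_2$ to obtain $Z$ (conditionally independent of $B$ given $Y$) yields a Markov chain $B\to Y\to Z$ whose conditional law of $Z$ given $B=i$ is exactly $(f_2\circ g_i)(x)$. Hence $\JSD\big((f_2\circ g_1)(x),(f_2\circ g_2)(x)\big) = I(B;Z) \leq I(B;Y) = \JSD(g_1(x),g_2(x))$, and taking square roots followed by the supremum over $x$ gives (ii).

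For (i) the two outputs $(f_i\circ g_1)(x) = \sum_{y} g_1(x)(y)\,f_i(y)$ are mixtures with the \emph{same} weights $w=g_1(x)$, so data processing does not apply directly; here I would instead use the chain rule for mutual information. With $B$ a fair coin, draw $Y\sim w$ \emph{independently} of $B$ and then $Z$ from $f_1(y)$ or $f_2(y)$ according to $B$, so that the law of $Z$ given $B=i$ is $(f_i\circ g_1)(x)$. Since $B\perp Y$ and adjoining $Y$ cannot decrease information, $I(B;Z)\leq I(B;Z\mid Y)$, whence
\[
\JSD\big((f_1\circ g_1)(x),(f_2\circ g_1)(x)\big) = I(B;Z) \leq I(B;Z\mid Y) = \sum_y w_y\,\JSD(f_1(y),f_2(y)).
\]
Bounding the average by the supremum and using $\sqrt{\sum_y w_y a_y}\leq \sup_y\sqrt{a_y}$ yields $\sqrt{\JSD(\cdots)}\leq \sup_y\sqrt{\JSD(f_1(y),f_2(y))} = d_{\JSD}(f_1,f_2)$, and the supremum over $x$ completes (i).

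The main obstacle will be getting the interaction between the square root and the averaging/supremum correct: the triangle-inequality split is legitimate only because $\sqrt{\JSD}$ (not $\JSD$ itself) is a metric, and in (i) one must pass from an \emph{average} of divergences to a \emph{supremum} before taking the square root — this is precisely what makes the additive bound come out with unit constant. Everything else is mutual-information bookkeeping (the chain rule and the data-processing inequality), which is routine once the auxiliary variables are arranged so that the conditional laws of $Z$ given $B$ reproduce the composed kernels $(f_i\circ g_j)(x)$.
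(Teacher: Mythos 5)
Your proposal is correct and takes essentially the same approach as the paper: a triangle-inequality split through a mixed composite (valid because $d_{\JSD}$ is a metric), the data-processing inequality for the case of a fixed post-composed kernel, and a conditioning argument (adjoining $Y$ cannot decrease information, then bounding the average by the supremum) for the case of a fixed pre-composed kernel. Your write-up simply formalises, via $I(B;Z)\leq I(B;Z\mid Y)=\sum_y w_y\,\JSD(f_1(y),f_2(y))$, what the paper argues verbally in its two numbered ``situations''.
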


\begin{remark}
The above can be summarized as follows:
$d_{\JSD}$ defines an \emph{enrichment} of $\FinStoch$
in the monoidal category $\Met$ of metric spaces.
We provide a brief description of enriched categories in \cref{sec:enriched}.
\end{remark}

In the following lemma we prove that JSD is compositional,
    which is key for the compositionality of our notion of abstraction error.
JSD is only an exemplary choice of distance.
More precisely, 
    causal model transformation error can be defined analogously
    and its compositionality is guaranteed by \cref{prop:error_composes}
    also for any other distance
    for which we can prove analogs of \cref{prop:short,lemma:jsdcomposes}.

We use
diagrams to depict some collection of objects and morphisms in a category.
We encourage readers
    unfamiliar with these diagrams
    to understand
    ``consider a diagram ...'' as
    ``consider a collection of finite sets and kernels ...''.

\begin{lemma}[JSD is compositional]\label{lemma:jsdcomposes}
  Consider a diagram (not necessarily commutative\footnote{
  A diagram of this type is \emph{commutative} 
  if each way of composing the depicted morphisms yields the same result –
  in this case, if $f = {\color{colA}b'ga}$ and $g={\color{colB}c'hb}$.})
  in $\FinStoch$ of the following form:
  \begin{equation}\label{commdiag}
  \begin{tikzcd}
    A \ar[colA,d, "a"] \ar[r, "f"] & A' \\
    B \ar[colA,r, "g"] \ar[colB,d, "b"] & B' \ar[colA,u, "b'"]\\
    C \ar[colB,r, "h"] & C' \ar[colB,u, "c'"]
  \end{tikzcd}
  \end{equation}
  Then $d_{\JSD}(f,{\color{colA}b'}{\color{colB}c'hb}{\color{colA}a})
  \leq d_{\JSD}(f, {\color{colA}b'ga}) + d_{\JSD}(g,{\color{colB}c'hb})$.
\end{lemma}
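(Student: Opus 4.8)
The plan is to recognise that this is, at heart, a single application of the triangle inequality for the metric $d_{\JSD}$ on the hom-set $\FinStoch(A,A')$, combined with the short-map property of composition from \cref{prop:short}. The diagram is a red herring in the sense that its non-commutativity is precisely what the two distance terms on the right-hand side quantify; the inequality is purely a statement about three morphisms $A \to A'$ and how composition interacts with $d_{\JSD}$.

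First I would introduce the abbreviation $g' \coloneqq c' \circ h \circ b \colon B \to B'$ for the ``long way round'' morphism, so that the target morphism on the left-hand side is $b' \circ g' \circ a$ and the quantity to be controlled is $d_{\JSD}(f, b' g' a)$. Since $d_{\JSD}$ is a genuine metric on $\FinStoch(A,A')$, the triangle inequality with $b' \circ g \circ a$ as the intermediate point gives
\[
  d_{\JSD}(f, b' g' a) \leq d_{\JSD}(f, b' g a) + d_{\JSD}(b' g a, b' g' a).
\]
The first summand is exactly the first term in the claimed bound, so it remains only to show $d_{\JSD}(b' g a, b' g' a) \leq d_{\JSD}(g, g')$, the second term.

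The second step bounds the mismatch after whiskering. Writing $b' g a = b' \circ (g \circ a)$ and $b' g' a = b' \circ (g' \circ a)$, I apply \cref{prop:short} with the fixed left factor $b'$: taking the two $\FinStoch(B',A')$-arguments both equal to $b'$ makes the term $d_{\JSD}(b', b')$ vanish, so post-composition by $b'$ is non-expansive, yielding $d_{\JSD}(b' g a, b' g' a) \leq d_{\JSD}(g a, g' a)$. Applying \cref{prop:short} once more with the fixed right factor $a$ (now the two $\FinStoch(A,B)$-arguments both equal $a$, so $d_{\JSD}(a,a) = 0$) shows pre-composition by $a$ is non-expansive, giving $d_{\JSD}(g a, g' a) \leq d_{\JSD}(g, g')$. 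Chaining these with the triangle inequality above, and recalling $g' = c' h b$, yields the claim.

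There is no real obstacle beyond invoking the right inequality: the only point needing care is the observation that the two-sided short-map bound of \cref{prop:short} specialises, by setting one pair of arguments equal so that its $d_{\JSD}$-contribution is zero, to the statement that pre- and post-composition by a fixed kernel are each non-expansive (equivalently, that $d_{\JSD}$ makes $\FinStoch$ a $\Met$-enriched category, as noted in the preceding remark). Everything else is just the metric axioms, so I expect the write-up to be short.
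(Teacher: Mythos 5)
Your proof is correct and takes essentially the same route as the paper's: the triangle inequality in $\FinStoch(A,A')$ through the intermediate morphism ${\color{colA}b'ga}$, followed by \cref{prop:short} to bound $d_{\JSD}({\color{colA}b'ga},{\color{colA}b'}{\color{colB}c'hb}{\color{colA}a}) \leq d_{\JSD}(g,{\color{colB}c'hb})$. The only difference is cosmetic: you spell out the two non-expansive whiskering steps (post-composition by ${\color{colA}b'}$, pre-composition by ${\color{colA}a}$) that the paper's one-line citation of \cref{prop:short} leaves implicit.
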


Visually, we imagine first replacing the morphism $f$ with {\color{colA}$b'ga$ },
incurring error 
$d_{\JSD}(f, {\color{colA}b'ga})$,
then replacing the morphism $g$ with {\color{colB}$c'hb$},
incurring error
$d_{\JSD}(g,{\color{colB}c'hb})$.
\cref{prop:short} ensures that the alteration of one part of a composition
does not create more error than the error associated with the alteration itself.
The triangle inequality ensures that successive alterations combine in a natural way.

We define our notion of compositional transformation error:

\begin{definition}[Transformation error]
    Let $f: M \to M'$ be a transformation of models in $\FinMod$,
    and $S \subseteq V(M')$ a subset of variables.
    
    The error associated with $S$
    is
    the Jensen-Shannon distance
    $d_{\JSD}\left({\color{colA}I^{M'}_{\{v_{i}\}}}\ ,\ 
    {\color{colD}f^{m}}\circ {\color{colC}I^{M}_{f_{V}^{-1}(\{v_{i}\})}} \circ {\color{colB}f^{i}}\right)$,
    which reflects the failure
    of the following diagram
    to commute
    \begin{center}
      \begin{tikzcd}
        {\prod_{i\in S} \mathbf{X}^{M'}_{v_i}} \ar[colB,thick,d] \ar[colA,thick,r] & {\prod_{v \in V(M')}\mathbf{X}^{M'}_v}
        \ {\footnotesize\rotatebox[origin=c]{15}{\textcolor{gray}{\textit{high}}}}
        \\
        {\prod_{i \in S}\prod_{f(v) = v_i}\mathbf{X}^{M}_v} \ar[colC,thick,r] & {\prod_{v \in V(M)} \mathbf{X}^M_v}
        \ {\footnotesize\rotatebox[origin=c]{15}{\textcolor{gray}{\textit{low}}}}
        \ar[colD,thick,u] \\[-2.2em]
        \ {\footnotesize\rotatebox[origin=c]{15}{\textcolor{gray}{\textit{intervention}}}}
        &
        \ {\footnotesize\rotatebox[origin=c]{15}{\textcolor{gray}{\textit{distribution}}}}
      \end{tikzcd}
    \end{center}

    The error of $f$ is the maximal error associated with any subset of $V(M')$:
    \vspace{-.5em}
    \[
     \error(f) = \max_{S\subseteq V(M')}
    d_{\JSD}\left({\color{colA}I^{M'}_{\{v_{i}\}}}\ ,\ 
    {\color{colD}f^{m}}\circ {\color{colC}I^{M}_{f_{V}^{-1}(\{v_{i}\})}} \circ {\color{colB}f^{i}}\right)
    \]
    \vspace{-.5em}
    The maximum exists, since $V(M')$ is finite.
  \end{definition}

The interpretation is as follows:
We capture how
 different the distribution in the high-level model is compared to
picking a corresponding low-level intervention ({\color{colB}$f^i$}),
 considering its implementation in $M$ ({\color{colC}$I^M_{f^{-1}_V(\{v_i\})}$}),
 and measuring on the high-level ({\color{colD}$f^m$}).

We prove that
this 
    notion of error is compositional:
  \begin{proposition}[Transformation error is compositional]
    \label{prop:error_composes}
    Let $f: M \to M', g: M' \to M''$ be transformations between models in $\FinMod$.\\
    Then
    \(\error(fg) \leq \error(f) + \error(g).\)
  \end{proposition}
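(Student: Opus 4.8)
The plan is to reduce the statement to a single application of \cref{lemma:jsdcomposes}, letting the composition law of $\FinMod$ supply exactly the morphisms the lemma's diagram requires. First I would recall how transformations compose: for $f: M \to M'$ and $g: M' \to M''$ the composite $fg: M \to M''$ has vertex map $(fg)_V = g_V \circ f_V$, measurement $(fg)^m_w = g^m_w \circ \prod_{g_V(v)=w} f^m_v$ (measure first from $M$ up to $M'$, then from $M'$ up to $M''$), and intervention kernel $(fg)^i_w = \bigl(\prod_{g_V(v)=w} f^i_v\bigr) \circ g^i_w$ (built the opposite way). The two facts I will lean on are that, after forming the evident products over fibres, $(fg)^m = g^m \circ f^m$ and $(fg)^i = f^i \circ g^i$ as morphisms in $\FinStoch$, together with the set-theoretic identity $(fg)_V^{-1}(T) = f_V^{-1}(g_V^{-1}(T))$ for every $T \subseteq V(M'')$.

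Next I would fix a subset $T \subseteq V(M'')$ and set $S := g_V^{-1}(T) \subseteq V(M')$, then instantiate the diagram \cref{commdiag} by taking its top arrow to be $I^{M''}_T$, its middle arrow to be $I^{M'}_S$, and its side arrows to be $a := g^i$, $b' := g^m$, $b := f^i$, $c' := f^m$, and $h := I^M_{f_V^{-1}(S)}$. Under this assignment the lemma's first approximant is $b'ga = g^m \circ I^{M'}_S \circ g^i$, so that $d_{\JSD}(I^{M''}_T, g^m \circ I^{M'}_S \circ g^i)$ is exactly the error of $g$ associated with the subset $T$, which is at most $\error(g)$ since $\error(g)$ is the maximum over all subsets. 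Likewise the second approximant is $c'hb = f^m \circ I^M_{f_V^{-1}(S)} \circ f^i$, so that $d_{\JSD}(I^{M'}_S, f^m \circ I^M_{f_V^{-1}(S)} \circ f^i)$ is the error of $f$ associated with the subset $S$, which is at most $\error(f)$. The lemma therefore yields $d_{\JSD}(I^{M''}_T,\, b'c'hba) \le \error(f) + \error(g)$.

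The crux, and the step I expect to demand the most care, is identifying the corner composite $b'c'hba = g^m \circ f^m \circ I^M_{f_V^{-1}(S)} \circ f^i \circ g^i$ with the single $fg$-error term associated with $T$, namely $(fg)^m \circ I^M_{(fg)_V^{-1}(T)} \circ (fg)^i$. This is where the composition law must be checked to interact correctly with the products indexed over fibres: one has to verify that $g^i$ followed by $f^i$ really is $(fg)^i$ (restricted to $T$), that the analogous identity holds for the measurement maps, and that the index set $f_V^{-1}(g_V^{-1}(T))$ coincides with $(fg)_V^{-1}(T)$ so that the low-level interventional distribution $I^M$ appearing in both expressions is literally the same kernel. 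Once these bookkeeping identities are in place, $b'c'hba$ equals the $fg$-error integrand for $T$ exactly, and no slack is introduced at this stage.

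Finally I would take the maximum over $T \subseteq V(M'')$. Since each $T$-indexed error of $fg$ is bounded by the constant $\error(f) + \error(g)$, so is their maximum, giving $\error(fg) \le \error(f) + \error(g)$; the maximum is attained because $V(M'')$ is finite, as already observed in the definition of $\error$.
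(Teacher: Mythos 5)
Your proof is correct and follows essentially the same route as the paper's: the paper likewise fixes an arbitrary intervention on $M''$, stacks the two approximately-commuting squares into the three-level diagram of \cref{lemma:jsdcomposes}, concludes that the composite fails to commute by at most $\error(f)+\error(g)$, and then maximises over interventions. The only difference is that you make explicit the composition law of $\FinMod$ and the identification of the corner composite with the $fg$-error term — bookkeeping that the paper's terser diagrammatic proof leaves implicit.
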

  \begin{proof}
    Let $\{v_{i} = x_{i}\}$ be any intervention in $M''$, and consider this diagram:
    \vspace{-.5em}
    \begin{center}
      \begin{tikzcd}
        {\prod_{i\in I} \mathbf{X}^{M''}_{v_i}} \ar[d] \ar[r] & {\prod_{v \in V(M'')}\mathbf{X}^{M''}_v}
        \ {\footnotesize \rotatebox[origin=c]{15}{\textcolor{gray}{\textit{high}}}}
        \\
        {\prod_{i \in I}\prod_{f(v) = v_i}\mathbf{X}^{M'}_v} \ar[d] \ar[r] & {\prod_{v \in V(M')} \mathbf{X}^{M'}_v}
        \ {\footnotesize \rotatebox[origin=c]{15}{\textcolor{gray}{\textit{mid}}}}
        \ar[u]\\
        {\prod_{i \in I}\prod_{f(v) = v_i}\prod_{g(v') = v}\mathbf{X}^M_{v'}} \ar[r] & \prod_{v \in V(M)} \mathbf{X}^M_v
        \ {\footnotesize\rotatebox[origin=c]{15}{\textcolor{gray}{\textit{low}}}}
        \ar[u] \\[-2.2em]
        \ {\footnotesize\rotatebox[origin=c]{15}{\textcolor{gray}{\textit{intervention}}}}
        &
        \ {\footnotesize\rotatebox[origin=c]{15}{\textcolor{gray}{\textit{distribution}}}}
      \end{tikzcd}
    \end{center}

    By assumption, the failure of the top diagram to commute
    is $\leq \error(f)$ and that of the bottom diagram is
    $\leq \error(g)$,
    so the failure of the composite to commute is $\leq \error(f) + \error(g)$.
    Since this holds for an arbitrary intervention, we have $\error(fg) \leq \error(f)+\error(g)$.
  \end{proof}

\subsection{Notions of abstraction and error}

For the Jensen-Shannon distance
\[d_{\JSD}: \FinStoch(\mathbf{X},\mathbf{Y})^{2} \to [0,\infty]\]
the following are the key properties
for our development of a compositional account of causal model transformations:
\begin{description}
  \item[reflexivity] $d_{\JSD}(f,f) = 0$
  \item[triangle inequality]
  for any kernels $f_{1},f_{2},f_{3}: \mathbf{X} \to \mathbf{Y}$, we have
  $d_{\JSD}(f_{1},f_{3}) \leq d_{\JSD}(f_{1},f_{2}) + d_{\JSD}(f_{2},f_{3})$
  \item[compositionality] for any $g_{1},g_{2} : \mathbf{X} \to \mathbf{Y}, f: \mathbf{Y} \to \mathbf{Z}$
  and $h: \mathbf{W} \to \mathbf{X}$, we have
  $d_{\JSD}(fg_{1},fg_{2}) \leq d_{\JSD}(g_{1},g_{2})$ and $d_{\JSD}(g_{1}h,g_{2}h) \leq d_{\JSD}(g_{1},g_{2})$
\end{description}

It is reasonable that $d_{\JSD}(f,g) = 0 \Rightarrow f=g$,
    which ensures that if there is \emph{no} error,
    the two distributions in question can be considered indistinguishable.
This also rules out pathological distances with $d(f,g) = 0$ for all $f,g$.

It is fruitful to discuss possible relaxations of the underlying distance
    when developing a compositional framework such as ours.
For example, the symmetry $d_{\JSD}(f,g)=d_{\JSD}(g,f)$ is not essential.
The interpretation of any chosen distance $d(f,g)$ in the definition of an error measure 
    is ``How bad is it to predict $f$ when the true distribution is $g$?''.
For this, it may be reasonable for the underlying notion to be asymmetric.
It may also be reasonable to replace the triangle inequality with modified versions like
$d(f,h)^{p} \leq d(f,g)^{p} + d(g,h)^{p}$ for some $p>0$.
For example, one could use the Jensen-Shannon divergence directly,
which satisfies this inequality for $p=1/2$.
The key requirement for compositionality is a useful bound on $d(f,h)$ in terms of $d(f,g)$ and $d(g,h)$.

The compositionality requirement on the distance may not be obvious.
Indeed, we may expect most natural notions of distance to satisfy it.
For a counterexample, consider variables valued in metric spaces and a Wasserstein distance.
Here, if one allows any kernel between metric spaces, compositionality fails:
if $\mathbf{X} = \{1,2\}, d_{\mathbf{X}}(1,2) = 1$, $f_{i}: * \to \mathbf{X}$ is the point-distribution at $i$,
then the sup-Wasserstein distance of $f_{1}$ and $f_{2}$ is $1$.
If we post-compose this with the identity map $i: \mathbf{X} \to \mathbf{Y}$,
where $Y = \{1,2\}, d_{\mathbf{Y}}(1,2) = 2$, then $d_\mathbf{Y}(if_{1},if_{2}) = 2 \centernot\leq 1= d_\mathbf{X}(f_1,f_2)$.
This is commonly avoided by restricting to
\emph{short} maps, that is, to those kernels
$g: \mathbf{X} \to \mathbf{Y}$ such that,
for each $x,x' \in \mathbf{X}$, $d_\mathbf{Y}(g(x),g(x')) \leq d_\mathbf{X}(x,x')$.
It is possible to replace $\JSD$ with another metric.
For example, the metric considered by \citep{beckers2019approximate}
    is induced by a chosen metric on the values of the random variables.
In our terms, this amounts to replacing $d_{\JSD}$
    with the 1-Wasserstein (or Kantorovich) distance,
    and replacing finite sets with a category of metric spaces.
One can show that
    the desired compositionality property holds
    as long as the maps between spaces are required to be short
    (cf.~\cref{prop:short} and see, for example, \citet{paolotobiasmet}).
This shortness requirement is not surprising:
    if the map $f: X \to Y$ maps two points that are indistinguishable
    to two completely different points,
    we cannot expect that this map preserve distances between distributions.
One could develop an analogous version of our theory for metric spaces;
in favour of a clear and concise exposition of our conceptual contribution
we refrain from this development here.

\section{Functors on abstractions}
\label{sec:functors}

Until now, we have considered the problem of comparing two models
    and developed a compositional approach to
    measuring the error incurred by replacing one model by another.
The discussion so far has considered this error \emph{relative to the original model}:
    for a transformation $f: M \to M'$; we 
    compare the prediction made by $M'$
    to the prediction made by $M$ and translated by $f$ into
    a prediction about the variables in $M'$.
This analysis elides the fact that both $M$ and $M'$ are imperfect approximations of reality.
In fact, $M'$ may be closer to reality than $M$ even if $\error(f) > 0$.

In most cases, however, modelling does not start from a ground-truth model,
    which we then seek to approximate. 
Instead, we may have different models for the same real-world system
    that are related to each other by transformations.
The errors of each model may be empirically estimated by performing experiments.
Here we discuss what can be said about the ``true'' error of some $M'$
    in terms of the ``true'' error of another $M$ and $\error(f)$ of $f:M\to M'$.

We make the following informal definition:
An \emph{implemented model} $M$ consists of a finite interventional causal model (also called $M$),
some specified procedure for obtaining a measurement of the variables in the model,
and a way of physically implementing each intervention in the model.

The interpretation of the above definitions is that
    an implementation of a model is an imaginary transformation from a somewhat
    ``idealized model'' representing ``ground truth''.
We can capture how well this implemented model describes reality
by an operational definition of error in terms of a two-player recognition game:
\begin{enumerate}
    \item First, player A chooses some intervention $i$, which is shown to player B.
    \item Then, according to a fair coin flip, player A either 
    physically implements the given intervention and measures the variables in real life, 
    or they sample the variable values according to the 
    interventional distribution
    $P_{M}(- \mid \doop(i))$
    described by the finite interventional causal model $M$.
    \item The outcome of step 2.\ (but not of the coin flip)
    is also revealed to player B.
    Player B must now choose a subjective probability $p \in [0,1]$
    that the measurement was taken in real life.
    \item If the measurement was real, player B scores $1 - \log_{2} p$, else they score $1 - \log_{2} (1-p)$.
\end{enumerate}

The optimal expected score for player B is obtained by choosing for $p$ the conditional probability
of the measurement being taken in real given the variable values; 
the optimal expected score is the mutual information between the coin flip and the values revealed to player B, that is, the Jensen-Shannon distance between real and model distribution~\cite{properscoring}
In principle, this recognition game allows to empirically assess the error of an implemented model
through interventional experimentation.

Let $M,M'$ be two implemented models and define
an \emph{implementation-preserving transformation} as a
model transformation $f: M \to M'$ satisfying both of the following conditions:
\begin{enumerate}
\setlength\itemsep{0em}
  \item For each intervention $i$ in $M'$,
  its implementation has the same effect as choosing an intervention in $M$ according to the distribution $f^{i}(i)$
  and implementing that.
  \item Measuring a variable $x$ in $M'$ is the same as measuring $f_{V}^{-1}(\{x\})$ in $M$ and then applying $f^{m}$.
\end{enumerate}
Here, the terms ``has the same effect'' and ``the same'' are to be informally understood.
We assume, for example,
    that the experimenter can
    sample a random intervention from $f^{i}(i)$
    in a way that does not interfere with the experiment itself.

Based on this, we might expect the following to hold:
\begin{itemize}
\item Given an implemented model $M$ and a transformation $f: M \to M'$ to some model $M'$,
  there is a unique implementation of $M'$ such that $f$ is implementation-preserving.
\item Given an implementation-preserving transformation $f: M \to M'$,
  the error of $M'$ is at most $\error(f)$ greater than the error of $M$.
\end{itemize}

Category theorists may recognise this as a \emph{functor}:
The assignment to each abstract model $M$ of its set of implementations and their corresponding error
    is a functor from $\FinMod$ to a category of error spaces (cf.~\cref{sec:enriched}).

The ``ground truth'' functor above is inaccessible to mathematical description,
but we can consider variants of this idea.
For example, if $M$ is a model, the assignment
\[M' \mapsto \big(\{f: M \to M'\}, f \mapsto \error(f) + \epsilon\big)\]
is a functor.
This corresponds to a situation where we know that reality is described by the model $M$
    with error at most $\epsilon$,
    while we are without access to ground truth itself. 
Nonetheless,
    we can evaluate models by only comparing them to $M$;
    we have to add an extra $\epsilon$ of error
    to account for the fact that $M$ itself is an imperfect approximation of reality.

The operational meaning is: If a transformation $M \to M'$ has error $\epsilon$,
    then the expected score of player B in the recognition game for $M'$ can be bounded in terms of the expected score of player B in the recognition game for $M$ and $\epsilon$,
    namely $\sqrt{e(M')} \leq \sqrt{e(M)} +\epsilon$.
This way, the error of a transformed model in describing reality
can be bounded without additional interventional real-world experiments.

The above analysis hinges on the compositionality of abstraction error,
    such that the error $\error(f)$ provides a bound
    on how much the error of the implemented model $M'$ is increased compared to $M$.
This underlines the fruitfulness of compositionality as a desideratum
    and of the categorical abstract viewpoint:
We can reasonably talk about bounding the error of a causal model relative
    to ground truth as long as we have a reference model $M$
    for which we evaluated the error
    by the intervention procedure outlined above.

\section{Discussion}\label{sec:discussion}

We provided a categorical perspective on causal model transformations.
Our approach is based on 
    a category of finite interventional causal models
    and satisfies an important desideratum:
    compositionality of model transformations and the associated approximation errors.
While we consider a category of causal models and transformations between them,
    existing work
    on the application of category theory to the domain of causal modelling
    has studied one causal model via categorical tools.
\citet{fong2013causal}, for example, develops the theory of directed acyclic graph models
    using syntactical categories.
This is further developed by \citet{Jacobs_2019},
    who demonstrate how to
    carry out causal inference
    by string diagram manipulations.
\citet{fritz2019synthetic} lays out the foundations for
    an ambitious programme of developing probability theory in the language of categories.
Working in this framework, \cite{patterson2020}
    develops an analogy between statistics and universal algebra,
    where a statistical model becomes a model of a theory, in the sense of logic. 
This separation of theory and model is akin to 
    the approach presented by \citet{bongers2018theoretical}:
    detaching the structural equations (the \emph{theory})
    from the random variables that simultaneously (almost surely)
    solve those equations (the \emph{model}),
    they provide a measure theoretic treatment of
    cyclic models.

Conceptualising a framework for causal model transformations
    can be motivated from different vantage points.
First, it helps characterise when observable variables may be
    ill-suited for a causal description
    by viewing the observables as a transformation
    of underlying causal entities with high error.
For example, in the analysis of electroencephalographic data we may wish to recover
    signals that correspond to cortical activity
    instead of reasoning about interventions on mixed electrode signals
    \citep{weichwald2016merlin}.
Second, if observables are ill-suited for a causal description,
    we may wish to find a transformation that yields variables
    amenable to a causal description.
\citet{chalupka2015visual}, for example,
    present how to learn the macroscopic visual cause
    of some behaviour from observed pixel values.
Third, we may be interested in abstracting or aggregating 
    information 
    to obtain a macro-level description of a system
    that is pragmatically more useful as it
    represents the information necessary for a certain task
    more clearly than a complex fine-grained model~\citep{hoel2013quantifying,hoel2017when,kinney2020causal,weichwald2019pragmatism}.
Last,
    approaches to infer causality between latent causal variables
    based on observed variables or time-subsampled observations
    may be embedded within a framework of causal model transformation
    where transformations encode which
    variables or time-points are unobserved~\citep{hyttinen2016causal,silva2006learning}.

    Our category theoretic framework of causal model transformations
    is instructive to
    clarify the assumptions and arguments required to proof its compositionality:
    We require the distance between kernels
    to be compatible with composition
    of kernels, that is, beyond the triangle inequality
    we require analogs of \cref{prop:short,lemma:jsdcomposes}.
    This condition is natural from a category-theoretical perspective.
    The formal tools of category theory
    enable diagrammatic reasoning
    and
    a simple proof that the resulting framework of causal model transformations and their abstraction errors
    is compositional.

\begin{acknowledgements}
    We thank the anonymous reviewers
    for their constructive comments
    that helped improve the interdisciplinary presentation.
    SW was supported by the Carlsberg Foundation.
\end{acknowledgements}

\clearpage
\bibliography{references}

\bibliographystyle{plainnat}

\doparttoc
\faketableofcontents

\onecolumn
\maketitle
\appendix

\addcontentsline{toc}{section}{Appendix}
\renewcommand \thepart{}
\renewcommand \partname{}
\part{Appendix}
\parttoc

\clearpage
\section{KL-divergence is arbitrarily far from satisfying the triangle inequality}
\label{appendix_kl_counterexample}

We recall the definition of Kullback-Leibler divergence.
\begin{definition}[Kullback-Leibler divergence]
For discrete probability distributions $q,p$ on the same probability space $\mathbf{X}$,
the KL-divergence from $q$ to $p$ is defined as
  $D(p|q) = \sum_{x\in\mathbf{X}} p(x)\ln\left(\frac{p(x)}{q(x)}\right)$.
\end{definition}

\begin{proposition}[KL-divergence fails the triangle inequality]
  For any $\epsilon > 0$, there exist probability measures $p,q,r$ on $\mathbb{N}$ so that
  \[ D(q|p) < \epsilon \text{ and } D(r|q) < \epsilon \text{ while } D(r|p) = \infty.\]
\end{proposition}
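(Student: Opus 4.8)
The plan is to give a single explicit family of measures indexed by a small weight $\delta>0$, and to let $\delta\to 0$ to kill the two ``short'' legs while the third stays infinite. First I would record what rules out the naive approach: since $D(q|p)<\infty$ forces $q\ll p$ and $D(r|q)<\infty$ forces $r\ll q$, we get $r\ll p$, so the infinitude of $D(r|p)$ cannot come from a support mismatch — it must be produced purely by the \emph{tails}. The mechanism I would exploit is that, when $p$ has an exponentially light tail, $D(a|p)=\sum_n a(n)\ln(a(n)/p(n))$ is governed by $\sum_n a(n)\,(-\ln p(n))\approx \ln 2\sum_n a(n)\,n$, i.e.\ essentially by the first moment of $a$, whereas a polynomially heavier tail is cheap under the reverse comparison $D(r|q)$.

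Concretely, fix a common point $n=1$ carrying mass $1-\delta$ in all three measures, and spread the remaining mass $\delta$ over $\{2,3,\dots\}$ proportionally to
\[ r(n)\propto n^{-2}, \qquad q(n)\propto n^{-3}, \qquad p(n)\propto 2^{-n}, \]
each normalised so that it is a probability measure on $\mathbb{N}$. Because the three measures agree at $n=1$, that point contributes $0$ to every divergence, and all the action is in the tail.

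I would then carry out three elementary tail estimates. For $D(r|p)$ the summand behaves like $n^{-2}\bigl(n\ln 2+O(\ln n)\bigr)$, whose sum contains $\ln 2\sum_n 1/n=\infty$; this is exactly the infinite first moment of the $n^{-2}$ tail seen against the exponential tail of $p$, so $D(r|p)=\infty$ \emph{for every} $\delta$. For $D(r|q)$ one has $r(n)/q(n)\propto n$, so the summand is $n^{-2}(\ln n+O(1))$ and $\sum_n n^{-2}\ln n<\infty$, giving $D(r|q)=\delta\,K_1$ with $K_1<\infty$; the key point is that a polynomially lighter $q$ keeps the reverse comparison summable. For $D(q|p)$ the summand is $n^{-3}\bigl(n\ln 2+O(\ln n)\bigr)$ and $\sum_n n^{-3}\cdot n=\sum_n n^{-2}<\infty$, giving $D(q|p)=\delta\,K_2$ with $K_2<\infty$ — here it is precisely the \emph{finite} first moment of the $n^{-3}$ tail (versus the infinite first moment of $r$) that is being used.

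Finally, given $\epsilon>0$, I would choose $\delta<\epsilon/\max(1,K_1,K_2)$, so that $D(r|q)<\epsilon$ and $D(q|p)<\epsilon$ while $D(r|p)=\infty$. The main obstacle — and the only genuinely nontrivial step — is resolving the tension between the two finiteness requirements and the single infinitude requirement: once one notices that finite $D(r|q)$ permits $q$ to be \emph{polynomially} (though not exponentially) lighter than $r$, the separation of tails $n^{-2},n^{-3},2^{-n}$ that drives the first moments apart is forced, and the common scaling by $\delta$ does the rest, since $\delta\cdot\infty=\infty$ but $\delta\cdot K_i\to 0$.
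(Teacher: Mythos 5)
Your construction is correct, and it takes a genuinely different route from the paper's. Both arguments are driven by the same tail triple $2^{-n},\,n^{-3},\,n^{-2}$ and the same mechanism (the $n^{-2}$ tail has infinite first moment, which is fatal against an exponential tail but harmless against a polynomial one), but the interpolation schemes differ. The paper keeps $q$ fixed and deforms the two outer measures toward it: $p_{\alpha}$ is a weighted \emph{harmonic} mixture of $p$ and $q$ (with a normalisation constant $Y_{\alpha}$) and $r_{\alpha}$ is an arithmetic mixture of $r$ and $q$; it must then show $D(q|p_{\alpha})\to 0$ and $D(r_{\alpha}|q)\to 0$ by a head/tail splitting argument requiring tail bounds uniform in $\alpha$ (via $Y_{\alpha}\geq 1$) and continuity of $\alpha\mapsto Y_{\alpha}$, plus a separate argument that $D(r_{\alpha}|p_{\alpha})=\infty$. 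Your scheme instead plants a shared atom of mass $1-\delta$ at $n=1$ and puts the three discrepant (normalised) tails $\tilde r,\tilde q,\tilde p$ on the common remaining mass $\delta$; since the three measures agree on the atom and assign equal total mass to the tail, each divergence decomposes \emph{exactly} as
\[
D(r_{\delta}|q_{\delta})=\delta\,D(\tilde r|\tilde q),\qquad
D(q_{\delta}|p_{\delta})=\delta\,D(\tilde q|\tilde p),\qquad
D(r_{\delta}|p_{\delta})=\delta\,D(\tilde r|\tilde p)=\infty ,
\]
so the three finiteness/infinitude computations are done once for fixed measures, and smallness is pure linear scaling in $\delta$ — no limit interchange, no uniform tail control, no continuity of normalising constants. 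This makes your proof more elementary and shorter; what the paper's version buys in exchange is only that the middle measure $q$ stays literally unchanged as the parameter varies, which the statement does not require. Your preliminary observation that $r\ll q\ll p$ forces the infinity to come from tails rather than support mismatch is also a nice touch absent from the paper. As a further benefit, your exact-scaling construction transfers essentially verbatim to the paper's follow-up finite-set proposition (truncate to $\{1,\dots,N\}$), where the paper again needs a limiting argument.
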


\begin{proof}

Let $p(x) = 1/2^{x}$, $q(x) = Z/x^{3}$, and $r(x) = Z'/x^{2}$,
where $Z,Z'$ are normalization constants so that
\[\sum_{x=1}^{\infty}q(x) = \sum_{x=1}^{\infty}r(x) = 1.\]

Observe that
\begin{align*}
D(q|p) &= \sum_{x\in\mathbb{N}} q(x)\ln\left(\frac{q(x)}{p(x)}\right)
    = \sum_{x\in\mathbb{N}} \ln\left(\frac{2^{x}Z}{x^{3}}\right)\frac{Z}{x^{3}} \\
\text{ and }\quad
D(r|q) &= \sum_{x\in\mathbb{N}} r(x)\ln\left(\frac{r(x)}{q(x)}\right)
    = \sum_{x\in\mathbb{N}} \ln\left(\frac{x^{3}Z'}{x^{2}Z}\right)\frac{Z'}{x^{2}},
\end{align*}
both series of which are convergent.
Thence, $D(q|p)$ and $D(r|q)$ are both bounded (the specific bound is not important here).

On the other hand,
\[D(r|p) = \sum_{x\in\mathbb{N}}r(x)\ln\left(\frac{r(x)}{p(x)}\right)
    = \sum_{x\in\mathbb{N}} \ln\left(\frac{2^{x}Z'}{x^{2}}\right)\frac{Z'}{x^{2}},\]
which is \emph{divergent}.
Thence, $D(r|p) = \infty$.

Now for each $\alpha \in [0,1]$,
let $p_{\alpha}(x) = Y_{\alpha}\left({\frac{\alpha}{p(x)} + \frac{1-\alpha}{q(x)}}\right)^{-1}$,
and $r_{\alpha}(x) = \alpha r(x) + (1-\alpha) q(x)$,
where $Y_{\alpha}$ is a suitable normalization constant so that these are probability distributions.
The series involved in computing the normalisation constant for $p_{\alpha}$
is convergent by the harmonic-arithmetic inequality, which also implies that $Y_{\alpha} \geq 1$.
$Y_{\alpha}$ is a continuous function of $\alpha$.

Now the claim is that $D(q| p_{\alpha}), D(r_{\alpha}|q) \to 0$ as $\alpha \to 0$,
while $D(r_{\alpha}|p_{\alpha}) =\infty$ for all $\alpha > 0$.

First consider
\[D(q|p_{\alpha}) = \sum_{x\in\mathbb{N}} \frac{Z}{x^{3}}
  \ln\left(Z\frac{ \alpha 2^{x}+(1-\alpha)x^{3}/Z}{Y_{\alpha}x^{3}}\right)\]

Since $\ln$ is increasing, we can bound this sum by
\[\sum_{x\in\mathbb{N}} \frac{Z}{x^{3}}\ln\left(Z\frac{\max(2^{x},x^{3}/Z)}{Y_{\alpha}x^{3}}\right)\]
Since $2^{x}Z$ dominates for $x$ large enough, this sum converges.
Let $\delta > 0$ be given.
Choose $M$ large enough that $\sum_{x=M}^{\infty}\frac{Z}{x^{3}}\ln\left(Z\frac{\max(2^{x},x^{3}/Z)}{Y_{\alpha}x^{3}}\right) < \delta/2$.
Note that $M$ is independent of $\alpha$ – this is possible because $Y_{\alpha} \geq 1$. Then we also have
\[ \sum_{x=M}^{\infty} \frac{Z}{x^{3}}
  \ln\left(Z\frac{\alpha 2^{x}+(1-\alpha)x^{3}/Z}{Y_{\alpha}x^{3}}\right) < \delta/2\]
Now each term of the sum goes to zero as $\alpha \to 0$, since $Y_{0} = 1$ and $\alpha \mapsto Y_{\alpha}$ is continuous.
Hence we can choose $\alpha$ small enough that the sum of the first $M-1$ terms is $< \delta/2$.
This proves that the whole sum goes to $0$ as $\alpha \to 0$.

For the case of $D(r_{\alpha}|q)$, we have
\[
D(r_{\alpha}|q) =
\sum_{x\in\mathbb{N}} \big(\alpha r(x) + (1-\alpha) q(x)\big)
\ln\left(\frac{\alpha r(x) + (1-\alpha)q(x)}{q(x)}\right)\]
We can use an analogous argument.
We can use monotonicity and the fact that $\frac{Z'}{x^{2}}$ is eventually bigger than $\frac{Z}{x^{3}}$ 
to prove convergence, with convergence speed independent of $\alpha$, 
so that the tail is eventually $<\delta/2$ independently of $\alpha$, then choose $\alpha$ to bound the head.

Now consider $D(r_\alpha|p_\alpha)$, which we can rewrite as
\[
D(r_{\alpha}|p_{\alpha}) =
\sum_{x\in\mathbb{N}} \left(\frac{\alpha Z'}{x^{2}} + \frac{(1-\alpha)Z}{x^{3}}\right)
\ln\left(\frac{\frac{\alpha Z'}{x^{2}}+\frac{(1-\alpha)Z}{x^{3}}}{\frac{Y_{\alpha}}{\alpha 2^{x} + (1-\alpha)x^{3}/Z}}\right).\]

First, we see that this is larger than $\sum_{x} \frac{\alpha Z'}{x^{2}}\ln\left(\frac{\frac{\alpha Z'}{x^{2}}}{\frac{Y_{\alpha}}{\alpha 2^{x} + (1-\alpha)x^{3}}}\right).$

We can rewrite this as

\[\sum_{x\in\mathbb{N}} \frac{\alpha Z'}{x^{2}} \ln(aZ'\frac{\alpha 2^{x} + (1-\alpha)x^{3}}{Y x^{2}})\]
\[\geq \sum_{x\mathbb{N}} \frac{\alpha Z'}{x^{2}} \ln\left(\alpha Z'\frac{\alpha 2^{x}}{Y_{\alpha} x^{2}}\right)\]

As above, this diverges independently of the value of $\alpha$ (as long as $\alpha > 0$).
Hence $D(r_{\alpha}|p_{\alpha}) = \infty$ for all $\alpha > 0$.

Hence for sufficiently small but positive $\alpha$, 
the distributions $p_{\alpha},q,r_{\alpha}$ satisfy the desired properties.

\end{proof}

One might hope that this rather stark result is just a quirk of the infinities involved,
disappearing when we restrict to finite probability measures.
Since KL-divergences on finite sets are always \emph{finite} (as long as the measures involved have the same support),
we cannot reproduce the infinity in the above result on finite sets.
However, we can come arbitrarily close, in the following sense:

\begin{proposition}[Instance of KL-divergence failing]
  For any $\epsilon, K > 0$,
  there exist a finite set $\{1,\dots N\}$ and probability measures $p,q,r$
  on it so that $D(q|p),D(r|q) < \epsilon, D(r|p) > K$.
\end{proposition}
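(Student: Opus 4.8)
The plan is to reduce the finite statement to the infinite one just established, by truncating the distributions from the previous proof to a large enough finite initial segment. The point is that the divergent sum witnessing $D(r \mid p) = \infty$ has partial sums that grow without bound, while the two convergent sums are approximated arbitrarily well by their truncations.

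First I would reuse the family $p_{\alpha}, q, r_{\alpha}$ on $\mathbb{N}$ constructed above. Given $\epsilon > 0$, the previous proposition lets me fix a single $\alpha > 0$ for which the two ``good'' divergences, computed over all of $\mathbb{N}$, satisfy $D(q \mid p_{\alpha}) < \epsilon/2$ and $D(r_{\alpha} \mid q) < \epsilon/2$, while $D(r_{\alpha} \mid p_{\alpha})$ diverges. Abbreviate these three distributions simply as $p, q, r$. Next, for each $N$ I truncate to $\{1, \dots, N\}$ and renormalise, setting $p^{(N)}(x) = p(x)/c_{p}^{(N)}$ with $c_{p}^{(N)} = \sum_{x=1}^{N} p(x)$, and similarly for $q^{(N)}, r^{(N)}$, and then examine each divergence as $N \to \infty$. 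Expanding, for instance,
\[
D(q^{(N)} \mid p^{(N)}) = \frac{1}{c_{q}^{(N)}} \sum_{x=1}^{N} q(x)\ln\frac{q(x)}{p(x)} \;+\; \ln\frac{c_{p}^{(N)}}{c_{q}^{(N)}},
\]
the correction term $\ln(c_{p}^{(N)}/c_{q}^{(N)}) \to \ln 1 = 0$ since both truncation masses tend to $1$, and the leading sum converges to the infinite-support value $D(q \mid p) < \epsilon/2$. Hence $D(q^{(N)} \mid p^{(N)}) \to D(q \mid p)$, and likewise $D(r^{(N)} \mid q^{(N)}) \to D(r \mid q)$; for $N$ large, both stay below $\epsilon$.

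It remains to show the bad divergence grows without bound. The analogous expansion gives $D(r^{(N)} \mid p^{(N)}) = \tfrac{1}{c_{r}^{(N)}}\sum_{x=1}^{N} r(x)\ln\frac{r(x)}{p(x)} + \ln\frac{c_{p}^{(N)}}{c_{r}^{(N)}}$. Because $p$ decays geometrically while $r$ decays only polynomially, the dominant terms behave like a positive multiple of $1/x$, so the partial sums $\sum_{x=1}^{N} r(x)\ln(r(x)/p(x))$ form a divergent harmonic-type series tending to $+\infty$ (this is exactly the divergence already verified); the factor $1/c_{r}^{(N)} \ge 1$ and the bounded, vanishing correction term do not affect this. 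Thus $D(r^{(N)} \mid p^{(N)}) \to \infty$, so I may choose $N$ large enough that it exceeds $K$ while the two good divergences remain below $\epsilon$. Taking that $N$ and relabelling $p^{(N)}, q^{(N)}, r^{(N)}$ as $p, q, r$ finishes the proof.

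I expect the main obstacle to be the bookkeeping around the renormalisation constants: I must verify that truncating and renormalising cannot inflate the small divergences past $\epsilon$ nor tame the divergent one, which amounts to checking that each $c_{\bullet}^{(N)} \to 1$ and that the induced correction terms are bounded and vanish. These estimates are routine given the explicit geometric and polynomial tails, but they are the only place where finiteness genuinely interacts with the argument.
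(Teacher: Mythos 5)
Your proof is correct and shares the paper's overall strategy---fix a small $\alpha>0$ from the infinite counterexample, truncate the three measures to $\{1,\dots,N\}$, and let $N\to\infty$---but your truncation device is genuinely different from the paper's. The paper leaves $p_{\alpha}(x),q(x),r_{\alpha}(x)$ untouched for $x<N$ and lumps \emph{all} tail mass into the atom $N$ (a pushforward along $x\mapsto\min(x,N)$); you instead condition on $\{1,\dots,N\}$ and renormalise. This shifts where the work lies. In the paper's decomposition the error to control is the single remainder term
\[
\Bigl(\sum_{x=N}^{\infty}p_{\alpha}(x)\Bigr)\ln\!\left(\frac{\sum_{x=N}^{\infty}p_{\alpha}(x)}{\sum_{x=N}^{\infty}q(x)}\right),
\]
which the paper bounds using a convexity (log-sum) estimate together with the convergence of $\sum_{x}p_{\alpha}(x)\ln(q(x))$; in your decomposition the error is the factor $1/c^{(N)}_{\bullet}\to 1$ and the correction $\ln\bigl(c^{(N)}_{p}/c^{(N)}_{q}\bigr)\to 0$, which is immediate once the truncation masses tend to $1$, so your treatment of the two small divergences is more elementary. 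Conversely, the paper's lumping is a deterministic pushforward, so the data-processing inequality would even give $D(q^{N}|p^{N})\le D(q|p_{\alpha})<\epsilon$ for \emph{every} $N$ with no limiting argument at all (the paper does not exploit this); your conditioned measures admit no such shortcut, since conditioning can increase KL divergence, so the explicit limit computation you give is actually necessary---and you do supply it. Both proofs handle the divergent term identically: the series for $D(r_{\alpha}|p_{\alpha})$ has eventually positive, harmonically growing terms, so its partial sums tend to $+\infty$, and neither the paper's (positive) lumped remainder nor your bounded correction term and $1/c^{(N)}_{r}\ge 1$ factor can prevent $D(r^{(N)}|p^{(N)})\to\infty$.
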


\begin{proof}
Let $p_{\alpha},q,r_{\alpha}$ be as before. 
Let $p_{\alpha}^{N},q^{N},r_{\alpha}^{N}$ be the distributions on $\{1,\dots N\}$ 
so that $p_{\alpha}^{N}(x) = p_{\alpha}(x)$ for $x \in \{1,\dots N-1\}$, etc.
Then $D(p_{\alpha}^{N}|r_{\alpha}^{N}) \to \infty$ as $N \to \infty$ – the sum computing this divergence is $N-1$ terms of the sum computing the overall (infinite) divergence, and a remainder term which is certainly positive for $N$ large enough.

On the other hand, we now show that $D(p_{\alpha}^{N}|q^{N}) \to D(p_{\alpha}|q)$ as $N \to \infty$.
To see this, consider the sum in question:
\[D(p_{\alpha}^{N}|q^{N}) = \sum_{x=1}^{N-1}p_{\alpha}(x)\ln(p_{\alpha}(x)/q(x)) +
\left(\sum_{x=N}^{\infty}p_{\alpha}(x)\right)\ln\left(\frac{\sum_{x=N}^{\infty}p_{\alpha}(x)}{\sum_{x=N}^{\infty}q(x)}\right)\]
The first term here converges to $D(p_{\alpha}|q)$, so it suffices to show that the remainder converges to zero.

We can write this remainder as $(\sum_{x=N}^{\infty}p_{\alpha}(x))(\ln(\sum_{x=N}^{\infty}p_{\alpha(x)}) - \ln(\sum_{x=N}^{\infty}q(x)))$

Observe that we can write
\[D(p_{\alpha}|q) = \sum_{x} p_{\alpha}(x)\ln(p_{\alpha}(x)) - \sum_{x} p_{\alpha}(x)\ln(q(x))\]
Since the left-hand side is finite, and the first sum is convergent, so is the second sum.

Hence we can use convexity of $\ln$ to obtain the following inequality:
\[\leq \left(\sum_{x=N}^{\infty}p_{\alpha}(x)\right)\ln\left(\sum_{x=N}^{\infty}p_{\alpha}(x)\right) -\sum_{x=N}^{\infty}p_{\alpha}(x)\ln(q(x))\]

Now convergence means that as $N \to \infty$ the second term goes to zero.
And since $\sum_{x=N}^{\infty}p_{\alpha}(x) \to 0$, and $x \ln(x) \to 0$ when $x \to 0$, so does the first term.

The analogous argument verifies the same statement for $D(q^{N}|r^{N}_{\alpha})$.

Thus, by choosing $\alpha$ small enough and $N$ big enough, we obtain the desired measures.
\end{proof}

\clearpage
\section{String-diagrammatic construction of interventional distributions}
\label{sec:diagrammatic_proofs}

In \cref{def:interventionaldists},
we claim that an interventional model has a well-defined interventional distribution,
a kernel $I_{S}: \prod_{v \in S} \mathbf{X}_{v}^{M} \to \prod_{v \in V({M})} \mathbf{X}^{M}_{v}$ for any subset $S \subseteq V({M})$.

The intuition is as follows.
To sample $(y_{v})_{v \in V({M})}$ according to $I_{S}(x_{v})_{v \in S}$,
we do the following:
\begin{enumerate}
  \item If $v \in S$, $y_{v} = x_{v}$ with probability $1$.
  \item Identify a variable $v$ so that $y_{v}$ has not been determined yet,
  but all its parents have been determined,
  and sample $y_{v}$ according to $\phi^{M}_{v}((y_{v'})_{v' \to v})$.
\end{enumerate}

The question is whether this gives a well-defined distribution.
We here prove that it does.

For convenience,
we write $\mathbf{X}^{M}_{X} = \prod_{v \in X}\mathbf{X}^{M}_{v}$
when $X \subset V({M})$ is a subset of variables of $M$
and
$\pa(v) \subseteq V({M})$ for the set of parents of $v \in V({M})$.

We use the graphical notation known as \emph{string diagrams}.
These are widely used in category theory to depict constructions in \emph{monoidal categories}.
A full discussion of the technical details behind their use is beyond the scope of this appendix
(see, for example, \citet{selinger}).
Their meaning in the special case under consideration, kernels between finite sets,
can be intuitively understood.
For example, the following diagram (read bottom-to-top)\\[.5em]
\centerline{\scalebox{0.7}{\tikzfig{figures/varswap_rightmost}}}\\[.5em]
depicts a kernel
\({\color{colA}\mathbf{X}_X^{M}} \to  {\color{colB}\mathbf{X}_y^{M}} \times {\color{colB}\mathbf{X}_X^{M}} \times
{\color{colB}\mathbf{X}_{y'}^{M}}\),
informally described as ``given \(x \in \mathbf{X}_X^M\),
sample \(y \in \mathbf{X}_y\) from the distribution
\(\phi^M_y(x)\)
and independently sample \(y'\) from the distribution
\(\phi^M_{y'}(x)\), then return the tuple \((y,x,y')\)''.

\begin{proposition}\label{interv_welldef}
  The interventional distribution in \cref{def:interventionaldists} is well-defined.
\end{proposition}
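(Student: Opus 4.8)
The plan is to show that the sampling procedure of \cref{def:interventionaldists} produces a kernel that does not depend on the order in which the non-intervened variables are processed; this order-independence is exactly the content of well-definedness.

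First I would fix the intervention set $S$ and make the procedure precise as a composite in $\FinStoch$. Since $G^M$ is a finite DAG, a topological order $<$ on $V(M)$ exists, and any such order lets us process the variables so that whenever we reach some $v \notin S$, all of $\pa(v)$ have already been assigned a value. Starting from the input wires indexed by $S$, at each step I copy the already-determined parent wires (using the commutative-comonoid ``copy'' structure of $\FinStoch$) and feed them into the mechanism $\phi^M_v$, appending a fresh output wire for $v$; for $v \in S$ the step is the identity on the corresponding input wire. The composite of all steps, followed by a permutation of wires into the order of $V(M)$, is a kernel $I_S^{<}\colon \mathbf{X}^M_S \to \mathbf{X}^M_{V(M)}$ depending a priori on $<$. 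Existence of at least one topological order gives existence of the kernel; it remains to prove that $I_S^{<}$ is independent of $<$.

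Next I would reduce order-independence to a single local move. It is a standard fact about finite posets that any two linear extensions are connected by a finite sequence of transpositions of adjacent incomparable elements, so I would invoke this for the ancestor order of $G^M$. Two non-intervened variables $u,w$ adjacent in the processing order are swappable precisely when they are incomparable in this ancestor order, since then each still has all its parents preceding it after the swap; in particular there is then no arrow between $u$ and $w$ in $G^M$, so neither value is an input to the other mechanism. Hence it suffices to show that transposing two such adjacent, edge-free sampling steps leaves the composite kernel unchanged. The local move is then verified by string-diagram surgery: the two steps act on disjoint fresh output wires while merely copying already-present parent wires, so they have the form $\phi^M_u \tensor \mathrm{id}$ and $\mathrm{id} \tensor \phi^M_w$ on complementary wires, and bifunctoriality of $\tensor$ together with naturality of the symmetry of $\FinStoch$ — the rewiring depicted above — makes their composite order-independent. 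Applying this move repeatedly along the connecting sequence yields $I_S^{<} = I_S^{<'}$ for any two topological orders.

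I expect the main obstacle to be the bookkeeping in the local move: making the ``disjoint wires'' claim literally an instance of the interchange law requires tracking how each shared parent wire is copied and routed to both mechanisms, and checking that the swap commutes with this copying. The underlying justification is the commutativity, associativity, and naturality of the copy map together with naturality of the symmetry in $\FinStoch$, but assembling these into a clean diagrammatic equality — rather than the conceptually clear statement that independent samples commute — is the delicate part.
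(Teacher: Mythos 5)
Your proposal is correct and takes essentially the same route as the paper's own proof: the paper likewise reduces well-definedness to transposing adjacent, causally incomparable variables (it proves your ``standard fact'' about linear extensions explicitly, as \cref{lem_permut_order}) and then verifies the local swap by exactly the string-diagram rewiring you describe, deleting the unused input wire to the other mechanism and rearranging wires. The only differences are cosmetic: you cite the poset lemma rather than prove it, and you phrase the local move via bifunctoriality of $\tensor$ and naturality of copy/symmetry where the paper presents it as a concrete two-step diagram manipulation.
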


For this proposition, we need the following lemma.

\begin{lemma}\label{lem_permut_order}
  Let $S$ be a finite partially ordered set.
  Let $A: s_{1} \dots s_{n}$ and $B: s'_{1} \dots s'_{n}$ be
  two totalizations of the ordering on $S$ --- in other words,
  two ways of arranging the elements of $S$ in a non-decreasing sequence.
  Then one can turn $A$ into $B$ by a finite sequence of transpositions, where each transposition exchanges two adjacent, incomparable elements.
\end{lemma}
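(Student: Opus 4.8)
The plan is to induct on the number of \emph{discordant pairs} between the two total orders $A$ and $B$, that is, on the number of unordered pairs $\{x,y\} \subseteq S$ whose relative order in $A$ is opposite to their relative order in $B$. When this count is zero, $A$ and $B$ agree on every pair and hence $A = B$, so no transpositions are needed; this is the base case. Since the count is at most $\binom{n}{2}$, the procedure terminates after finitely many steps, which also yields the claimed finiteness.

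For the inductive step, assuming $A \neq B$, I would exhibit a single admissible transposition — an exchange of two adjacent, incomparable elements of $A$ — that strictly decreases the number of discordant pairs. The central claim is that among all discordant pairs, one chosen to be as close as possible in $A$ is in fact \emph{adjacent} in $A$. To prove this, suppose $(x,y)$ is discordant with $x$ before $y$ in $A$ and $y$ before $x$ in $B$, chosen of minimal distance in $A$, and suppose for contradiction that some $z$ lies strictly between $x$ and $y$ in $A$. Comparing the position of $z$ to that of $x$ in $B$ gives two cases: if $z$ precedes $x$ in $B$, then $(x,z)$ is discordant and closer in $A$; if $z$ follows $x$ in $B$, then since $y$ precedes $x$ in $B$ we get $y$ before $z$ in $B$ while $z$ precedes $y$ in $A$, so $(z,y)$ is discordant and closer in $A$. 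Either case contradicts minimality, so $x$ and $y$ are adjacent in $A$.

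Two further observations complete the step. First, $x$ and $y$ must be \emph{incomparable} in $S$: were they comparable, both totalisations would be forced to order them identically, contradicting discordance — hence the transposition swapping them is admissible. Second, exchanging two adjacent incomparable elements of a totalisation again produces a totalisation, since the only pair whose relative order changes is $\{x,y\}$, which carries no order constraint of $S$. This swap fixes the orientation of $\{x,y\}$ to match $B$ while altering no other pair, so it lowers the discordant-pair count by exactly one, and the inductive hypothesis finishes the argument.

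I expect the main obstacle to be the adjacency claim, specifically making the case analysis on the position of the intermediate element $z$ airtight; by comparison, the incomparability observation and the closure of totalisations under admissible swaps are routine.
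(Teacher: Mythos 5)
Your proof is correct and takes essentially the same approach as the paper's: a descent on the number of discordant pairs, repeatedly swapping an adjacent discordant pair, which is necessarily incomparable, until the count reaches zero. The only (minor) difference is that you justify the existence of an \emph{adjacent} discordant pair by an extremal minimal-distance argument, whereas the paper asserts it directly (implicitly via transitivity of the ordering in $B$); both close that gap adequately.
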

\begin{proof}
  Let's show that any nondecreasing sequence can be turned into $B$ by such a sequence of transpositions --- this is really the content of the lemma.
  Define the error of a sequence $s_{1} \dots s_{n}$ as the total number of pairs $i,j$ so that $s_{i}$ and $s_{j}$ are not in the same order as in $B$.
  If the error is zero, we must already be in sequence $B$.
  Suppose the error is greater than zero.
  Then there must be a pair of consecutive elements, $s_{i},s_{i+1}$, that are in the wrong order compared to the ordering $B$.
  The elements must also be incomparable:\\
  \hspace*{1em}1) we cannot have $s_{i+1} \leq s_{i}$, since it is a non-decreasing sequence, \\
  \hspace*{1em}2) we cannot have $s_{i} \leq s_{i+1}$ —
  if this was true, the pair would already be in the same order as in $B$.\\
  Hence we can swap $s_{i}$ and $s_{i+1}$, which decreases the error by $1$.
  After a finite number of steps the error must be zero and we have obtained $B$.
\end{proof}

\begin{proof}[Proof of \cref{interv_welldef}]
  By applying \cref{lem_permut_order} to the vertices of the DAG $G^M$,
  partially ordered by causal dependence,
  we see that we can move between any two constructions of
  the interventional distribution by swapping two consecutive variables at a time.
  Hence it suffices to show that we may swap the order of two consecutive $y_{i}$,
  neither dependent on the other,
  without changing the final distribution.
  Consider the following diagram manipulation:
\begin{equation}\label{eq:varswap}
\scalebox{0.7}{\tikzfig{figures/varswap}}
\end{equation}
    In the first step we use the fact that $y'$ does not depend on $y$, so we may delete the $\mathbf{X}^{M}_{y}$ input to $\phi_{y'}^{M}$.
  Then rearrange the wires.

  This shows that the composition $\mathbf{X}^{M}_{X} \to \mathbf{X}^{M}_{X \cup \{y\}} \to \mathbf{X}_{X \cup \{y,y'\}}^{M}$ is equal to another map $\mathbf{X}_{X} \to \mathbf{X}_{X \cup \{y,y'\}}$.
  A similar argument shows that the composition $\mathbf{X}_{X} \to \mathbf{X}_{X \cup \{y'\}} \to \mathbf{X}_{X \cup \{y,y'\}}^{M}$ is equal to the same map.
  This concludes our proof.
\end{proof}

We also prove our claim that this distribution is ``the right one'',
in the sense that the mechanisms are the conditional distributions.
We introduce the following diagram-theoretic definition of conditionals:
\begin{definition}\label{conditional_def}
  Let $\psi: \spc{A} \to \spc{X} \times \spc{Y}$ be a Markov kernel.
  We say that a kernel $p : \spc{A} \times \spc{X} \to \spc{Y}$
  is a \emph{conditional distribution of $Y \in \spc{Y}$ given $A \in \spc{A}$ and $X \in \spc{X}$},
  if there exists $\phi: \spc{A} \to \spc{X}$ so that we have the following identity:
  \begin{equation}\label{eq_condeq}
    \tikzfig{figures/condeq}
  \end{equation}
\end{definition}

\begin{remark}
  \Cref{conditional_def} is a definition of conditional distributions suitable for parameterized joint distributions.
  Dealing with such distributions is necessary if we want to combine conditional and interventional distributions.
  In the case $\spc{A} = \{*\}$, we recover the usual situation of a joint distribution on a product set.

  Let us spell out the connection with the normal definition of conditional distribution:
  a map $p: \spc{A} \times \spc{X} \to \spc{Y}$ is a conditional distribution for $\psi: \spc{A} \to \spc{X} \times \spc{Y}$ if and only if, for all $a \in \spc{A}$,
  and for all $x \in \spc{X}$ with nonzero probability given $a$, the distribution $p(a,x)$ is the conditional distribution of $Y$ given $X=x$ and $(X,Y) \sim \psi(a)$.
  This is also the reason we say \emph{a} conditional distribution and not \emph{the} conditional distribution.

  For a more thorough discussion of this point from a categorical point of view,
  see, for example,~\citet[Section 11
  (in particular Definition 11.5 and Remark 11.6)]{fritz2019synthetic}.
\end{remark}

\begin{proposition}\label{mech_is_cond}
  Each mechanism $\phi_{X}^{M} : \mathbf{X}^{M}_{\pa(v)} \to \mathbf{X}^{M}_{v}$ is a conditional distribution for the observational distribution
  $* \to \mathbf{X}^{M}_{\pa(V)} \times \mathbf{X}^{M}_{v}$.
\end{proposition}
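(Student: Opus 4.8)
The plan is to verify the string-diagram identity of \cref{conditional_def} directly, taking $\spc{A}=\{*\}$, $\spc{X}=\mathbf{X}^{M}_{\pa(v)}$, $\spc{Y}=\mathbf{X}^{M}_{v}$, the witnessing kernel $p=\phi^{M}_{v}$, and $\phi:*\to\mathbf{X}^{M}_{\pa(v)}$ the \emph{marginal} of the observational distribution onto the parents of $v$. Concretely, I must show that the observational marginal $\psi$ on $\pa(v)\cup\{v\}$ is generated by first sampling the parents from $\phi$, then copying that value and feeding one copy into $\phi^{M}_{v}$ to produce $v$ while the other copy is output — which is exactly the disintegrated form of \cref{eq_condeq} in the special case $\spc{A}=\{*\}$.

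First I would invoke \cref{interv_welldef} to construct the observational distribution in a topological order of my choosing: arrange the vertices so that every ancestor of $v$ (in particular every parent) is sampled before $v$, that $v$ is sampled as soon as its parents are available, and that all remaining vertices are sampled afterwards. This is a legitimate linearization of the causal order, and by \cref{interv_welldef} the resulting kernel equals the observational distribution irrespective of the chosen order, so the marginal is unaffected.

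Next I would form the marginal onto $\pa(v)\cup\{v\}$ by post-composing with discard maps on every other variable, then simplify using the copy--discard (Markov-category) structure of $\FinStoch$, in which $\{*\}$ is terminal: for any kernel $f$, discarding its output equals discarding its input. Hence every box sampling a variable sampled \emph{after} $v$ may be deleted outright, since those variables are discarded and nothing upstream reads them. What remains is a diagram that samples the upstream block $U\supseteq\pa(v)$, then samples $v$ via $\phi^{M}_{v}$ fed only by the wires of $\pa(v)$, and finally discards $U\setminus\pa(v)$.

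The crux is then to commute the discard of $U\setminus\pa(v)$ to the left, past the $\phi^{M}_{v}$ box. This is justified precisely because a mechanism reads only the wires of $\pa(v)$ by definition, so there is no wire from $U\setminus\pa(v)$ into $\phi^{M}_{v}$ to obstruct the slide. After commuting, the sub-diagram ``sample $U$, discard $U\setminus\pa(v)$'' is by definition the parent-marginal $\phi$, while the copy map duplicating the $\pa(v)$ wire sends one copy to the output and one into $\phi^{M}_{v}$, rendering the whole diagram in the disintegrated form of \cref{eq_condeq} (the case $\pa(v)=\varnothing$ for a root node is the degenerate instance where $\phi^{M}_{v}$ is itself the marginal). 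I expect this discard-commutation, together with the bookkeeping that the \emph{same} parent value is both output and fed to $\phi^{M}_{v}$ via the copy, to be the only nontrivial step; everything else is a routine application of the comonoid laws already implicit in the surgery of \cref{interv_welldef}.
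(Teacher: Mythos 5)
Your proposal is correct and takes essentially the same approach as the paper: both proofs verify the identity of \cref{conditional_def} (\cref{eq_condeq}) with $\spc{A}=\{*\}$ and $\phi$ the parent marginal, by showing that the observational marginal on $\pa(v)\cup\{v\}$ factors as $* \to\mathbf{X}^{M}_{\pa(v)} \overset{(1,\phi^{M}_{v})}{\longrightarrow} \mathbf{X}^{M}_{\pa(v)} \times \mathbf{X}_{v}^{M}$, i.e.\ ``sample the parents, copy, apply the mechanism''. The only difference is one of detail: you spell out the marginalization steps (choosing a topological order via \cref{interv_welldef}, deleting post-$v$ boxes by discardability, and sliding the discards on the non-parent ancestors past the $\phi^{M}_{v}$ box) that the paper compresses into the single phrase ``after marginalizing out the other variables, the observational distribution factors as'' above its diagram.
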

\begin{proof}
  Recall that given a distribution $* \to \spc{X} \tensor \spc{Y}$, a kernel $\spc{X} \to \spc{Y}$ is a conditional distribution if and only
  if we have the identity in \cref{eq_condeq}.
  After marginalizing out the other variables, the observational distribution on $ \mathbf{X}^{M}_{\pa(v)} \times \mathbf{X}_{v}^{M}$ factors as
  \[* \to\mathbf{X}^{M}_{\pa(v)} \overset{(1,\phi^{M}_{v})}{\longrightarrow} \mathbf{X}^{M}_{\pa(v)} \times \mathbf{X}_{v}^{M}.\]
  Diagrammatically, this looks like

  \begin{center}
     \tikzfig{figures/x-pa-x}
  \end{center}

  Which is the statement we wanted.
\end{proof}
Here the triangle denotes the observational distribution on $\mathbf{X}^{M}_{\pa(v)}$.

There is nothing surprising about this proposition:
just as in the classical theory of graphical models, it holds by construction.
In classical treatments,
this is usually not made into a theorem,
although it is implicit in most treatments of the Markov property for structure causal models,
see, for example,~\citet[Proposition 6.31]{peters2017elements}.

\begin{proposition}\label{markov_prop}
  The observational distribution satisfies the directed Markov property with respect to the DAG $G$.
\end{proposition}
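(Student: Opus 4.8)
The plan is to reduce the statement to the \emph{local} directed Markov property — that each variable $X_v$ is conditionally independent of its non-descendants given its parents — and to obtain this by exploiting the freedom, established in \cref{interv_welldef}, to construct the observational distribution along \emph{any} topological ordering of the vertices of $G^M$. The global (d-separation) form of the directed Markov property then follows from the standard equivalence of the local, global and recursive-factorization criteria for DAG models; and the factorization criterion is in any case immediate here, since the observational distribution is \emph{defined} by wiring the mechanisms together in topological order.

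First I would fix a vertex $v$ and record a purely combinatorial fact: the linear order that lists first all non-descendants $\nd(v)$ (in some topological order among themselves), then $v$, then all descendants of $v$, is itself a valid topological order of $G^M$. This is a short case check on each edge $a \to b$. A parent of $v$ is necessarily a non-descendant, since acyclicity forbids a parent from also being a descendant, so every in-edge of $v$ originates in the first block; every out-edge of $v$ lands in the descendant block; and any edge leaving a descendant of $v$ again lands on a descendant, so no edge ever points from the descendant block back into the earlier blocks. Next, by \cref{interv_welldef} the observational distribution may be sampled along this order. Marginalizing out the descendants, the joint law of $(X_{\nd(v)}, X_v)$ is obtained by first sampling $X_{\nd(v)}$ and then drawing $X_v \sim \phi_v^M(X_{\pa(v)})$, where $\pa(v) \subseteq \nd(v)$. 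Running the argument of \cref{mech_is_cond} on this $\nd(v)$-marginal — that is, applying \cref{conditional_def} with the $\nd(v)$-block in the role of the conditioning variable — shows that $\phi_v^M$ is a conditional distribution of $X_v$ given $X_{\nd(v)}$ that consults only the parent coordinates $X_{\pa(v)}$. This is precisely the statement $X_v \perp X_{\nd(v)\setminus\pa(v)} \mid X_{\pa(v)}$, and since $v$ was arbitrary the local directed Markov property holds.

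I expect the main obstacle to be the middle step: phrasing ``conditionally independent of the non-descendants given the parents'' inside the string-diagram calculus and verifying that the copy–discard bookkeeping genuinely isolates the parent wires as the only inputs to $\phi_v^M$. Concretely, one must exhibit the $(\nd(v),v)$-marginal as the $\nd(v)$-marginal followed by $\phi_v^M$ precomposed with a copy of the parent wires, and recognise this diagram as the defining identity of a conditional in the sense of \cref{conditional_def}. Once that identity is in place the conclusion is essentially the remark that, in the chosen sampling order, drawing $X_v$ reads nothing beyond $X_{\pa(v)}$; the remaining passage from the local property to the global directed Markov property is the classical DAG-model equivalence and requires no new input from our framework.
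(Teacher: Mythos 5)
Your proposal is correct and takes essentially the same route as the paper: both exploit the ordering freedom from \cref{interv_welldef} to sample the non-descendants of $v$ first and then $v$ from its mechanism, thereby reducing the claim to the local Markov property at each vertex (the paper reads ``directed Markov property'' as exactly this local statement, so your additional appeal to the classical local/global equivalence is harmless extra work). The only step you elide is the passage from ``the conditional of $X_v$ given $X_{\nd(v)}$ reads only the parent wires'' to the independence $X_v \perp X_{\nd(v)\setminus\pa(v)} \mid X_{\pa(v)}$: in the paper's diagrammatic setting this is not definitional but requires a second factorization --- sample the parents first, then the remaining non-descendants conditionally, so that the copying happens on the parent wires --- after which \citet[Remark 12.2]{fritz2019synthetic} certifies the conditional independence; classically this equivalence is standard for finite discrete distributions, so nothing in your plan actually fails.
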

\begin{proof}
  We must prove that any variable $v$ is independent of its non-descendants given its parents.
  Let us introduce the notation $\nd(v)$ for the non-descendants of $v$ excluding its parents.
  Then we are trying to show $\nd(v) \bot v \mid \pa(v)$.

  Observe this diagram manipulation:
\vspace{0.5cm}
  \begin{center}
   \tikzfig{figures/obs-indep}
  \end{center}
\vspace{0.5cm}
  In the first step, we are factoring the observational distribution on $v\cup\pa(v)\cup\nd(v)$ as ``sample the parents and non-descendants of $v$, then sample $v$ conditional on the parents'' --- according to the definition, this is a possible choice for how to construct the observational distribution.

  In the second step, we are factoring the distribution on $\pa(v) \cup \nd(v)$ as ``sample the parents of $v$, then sample the non-descendants of $v$ \emph{according to the conditional distribution}''.
  This is always possible, and yields the diagram on the right.

  By~\citet[Remark 12.2]{fritz2019synthetic}, this implies the conditional independence
  $\nd(v) \bot v \mid \pa(v)$.
\end{proof}

\begin{remark}
  In fact, \cref{mech_is_cond,markov_prop} characterize the observational distribution uniquely.
  This can be proven diagrammatically by using a diagrammatic formulation of \cref{markov_prop} to show that the observational distribution factorizes as a certain diagram,
  and then using \cref{mech_is_cond} to show that the morphisms in this diagram may be replaced by the mechanisms.

  In classical terms, this argument corresponds to arguing that the probability factorizes according to the graph, and that the factors must be precisely the mechanisms.
\end{remark}

\clearpage
\section{Jensen-Shannon divergence is compositional}\label{sec:proofs}

\begingroup
\def\thetheorem{\ref{prop:short}}
\begin{proposition}[kernel composition is short]
  The composition of kernels
  \[\FinStoch(\mathbf{X},\mathbf{Y}) \otimes \FinStoch(\mathbf{Y},\mathbf{Z}) \to \FinStoch(\mathbf{X},\mathbf{Z})\]
   is a short map, that is,
  \[d_{\JSD}(f_{1}\circ g_{1},f_{2}\circ g_{2}) \leq d_{\JSD}(f_{1},f_{2}) + d_{\JSD}(g_{1},g_{2})\]
  for any $f_1,f_2\in\FinStoch(\mathbf{Y},\mathbf{Z})$,
  $g_1,g_2\in\FinStoch(\mathbf{X},\mathbf{Y})$.
\end{proposition}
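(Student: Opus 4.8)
The plan is to exploit that $d_{\JSD}$ is already known to be a metric, so the triangle inequality is available, and to reduce the two-sided statement to two one-sided monotonicity facts. Concretely, I would insert the intermediate kernel $f_2 \circ g_1 \in \FinStoch(\mathbf{X},\mathbf{Z})$ and write
\[
d_{\JSD}(f_1 \circ g_1, f_2 \circ g_2) \leq d_{\JSD}(f_1 \circ g_1, f_2 \circ g_1) + d_{\JSD}(f_2 \circ g_1, f_2 \circ g_2),
\]
so that it suffices to establish (i) $d_{\JSD}(f_1 \circ g_1, f_2 \circ g_1) \leq d_{\JSD}(f_1,f_2)$, which varies only the outer kernel, and (ii) $d_{\JSD}(f_2 \circ g_1, f_2 \circ g_2) \leq d_{\JSD}(g_1,g_2)$, which varies only the inner kernel. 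Since $d_{\JSD}$ is a supremum over inputs of a pointwise square-rooted $\JSD$, for each of (i) and (ii) it is enough to prove the corresponding inequality between the $\JSD$ values at a single fixed input $x \in \mathbf{X}$, and then take square roots and suprema over $x$.

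For both one-sided bounds I would work through the defining interpretation of $\JSD$ as the mutual information $I(B;\,\cdot\,)$ between a fair coin $B$ and a sample drawn from the first or second distribution according to $B$. For (ii), fix $x$ and observe that $(f_2 \circ g_i)(x)$ is the image of $g_i(x)$ under the \emph{single} channel $f_2$; setting $Y \mid B{=}i \sim g_i(x)$ and then $Z \sim f_2(Y)$ produces a Markov chain $B \to Y \to Z$, and the data-processing inequality gives $I(B;Z) \leq I(B;Y)$, that is, $\JSD((f_2 \circ g_1)(x), (f_2 \circ g_2)(x)) \leq \JSD(g_1(x), g_2(x))$. For (i), fix $x$ and write $\mu = g_1(x)$, so that $(f_i \circ g_1)(x) = \int_y f_i(y)\, \mathrm{d}\mu(y)$ is a $\mu$-mixture whose mixing weights do not depend on $i$. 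Introducing an auxiliary $Y \sim \mu$ independent of $B$ and letting $Z \mid B{=}i, Y{=}y \sim f_i(y)$, the chain rule together with $I(B;Y)=0$ yields $I(B;Z) \leq I(B;Z \mid Y) = \sum_y \mu(y)\, \JSD(f_1(y), f_2(y)) \leq \sup_y \JSD(f_1(y), f_2(y))$. Taking square roots then bounds this by $d_{\JSD}(f_1,f_2)$.

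I expect the main obstacle to be the careful bookkeeping in the mixture argument for (i): one must set up the auxiliary variable $Y$ so that marginalising it reproduces the mixtures $(f_i \circ g_1)(x)$ while keeping $Y$ independent of the coin $B$, and then correctly invoke the chain rule $I(B;Z,Y) = I(B;Y) + I(B;Z \mid Y)$ to pass from the mixed distributions to the per-point $\JSD$ values. Once these two information inequalities are in place, the combination with the triangle inequality is immediate, and the final $\sup_x$ over inputs passes through both bounds without difficulty, since each holds at every fixed $x$.
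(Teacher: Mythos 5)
Your proposal is correct and follows essentially the same route as the paper's proof: reduce via the triangle inequality to the two one-sided bounds (varying only the outer or only the inner kernel), pass to a fixed input $x$, and then establish the outer-kernel case by the data-processing inequality and the inner-kernel case by conditioning on an auxiliary $Y$ independent of the coin $B$. The only difference is presentational --- the paper argues both information inequalities verbally (``postprocessing can only reduce mutual information'' and the ``being told $x$'' argument), whereas you make them precise via the Markov chain $B \to Y \to Z$ and the chain rule $I(B;Z,Y) = I(B;Y) + I(B;Z \mid Y)$, which is a welcome tightening rather than a different proof.
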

\addtocounter{theorem}{-1}
\endgroup

\begin{proof}
  Since $d_{\JSD}$ is a metric, this is equivalent to the two statements
  \[d_{\JSD}(fg_{0},fg_{1}) \leq d_{\JSD}(g_{0},g_{1})\]
  \[d_{\JSD}(fg,f'g) \leq d_{\JSD}(f,f')\]

  In each case it suffices to show the given inequality at each $x \in \mathbf{X}$,
  so we can assume that $\mathbf{X} = *$. Since $x \mapsto \sqrt{x}$ is a monotone map,
  it suffices to show that
  \begin{align*}
  \JSD(fp_{0},fp_{1}) &\leq \JSD(p_{0},p_{1}) \\
  \text{and}\quad\JSD(f_{0}p,f_{1}p) &\leq \sup_{x}\JSD(f_{0}(x),f_{1}(x)),
  \end{align*}
  where $p_{0},p_{1}$ are distributions on $\mathbf{Y}$.

  The first case follows, since ``postprocessing'' the random variable $X$ can only reduce its mutual
  information with $B$.

  In the second case, we are comparing the following two situations:
  \begin{enumerate}
    \item If you learn the value of a random variable sampled from $f_{i}(x)$, how much do you learn about $i$, if $x$ is chosen to maximize this amount of information.
    \item If you learn the value of a random variable sampled from $f_{i}(x)$, where $x$ is chosen at random, how much do you learn about $i$?
  \end{enumerate}

  To see that the first number is bigger, consider the following:
  In the second case,
  even if you were additionally told what $x$ was (giving you \emph{more} information),
  you would still, at best, be in the first situation.
\end{proof}

\begingroup
\def\thetheorem{\ref{lemma:jsdcomposes}}
\begin{lemma}[JSD is compositional]
  Consider a diagram (not necessarily commutative)
    in $\FinStoch$ of the following form:
  \begin{equation}\label{commdiag}
  \begin{tikzcd}
    A \ar[colA,d, "a"] \ar[r, "f"] & A' \\
    B \ar[colA,r, "g"] \ar[colB,d, "b"] & B' \ar[colA,u, "b'"]\\
    C \ar[colB,r, "h"] & C' \ar[colB,u, "c'"]
  \end{tikzcd}
  \end{equation}
  Then $d_{\JSD}(f,{\color{colA}b'}{\color{colB}c'hb}{\color{colA}a})
  \leq d_{\JSD}(f, {\color{colA}b'ga}) + d_{\JSD}(g,{\color{colB}c'hb})$.
\end{lemma}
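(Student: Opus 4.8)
The plan is to deduce the bound from two facts already established: the triangle inequality for the metric $d_{\JSD}$, and the non-expansiveness of composition provided by \cref{prop:short}. The guiding observation is that the two kernels being compared, $f$ and ${\color{colA}b'}{\color{colB}c'hb}{\color{colA}a}$, can be bridged by the intermediate kernel ${\color{colA}b'ga}$ --- which is precisely the kernel appearing in the first error term on the right-hand side of the claim. Once this bridge is chosen, the first leg of the triangle inequality reproduces the first summand verbatim, and the remaining work is to show that the second leg is controlled by $d_{\JSD}(g,{\color{colB}c'hb})$.

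First I would use that $d_{\JSD}$ is a metric on $\FinStoch(A,A')$ and apply the triangle inequality through ${\color{colA}b'ga}$:
\[
d_{\JSD}(f,{\color{colA}b'}{\color{colB}c'hb}{\color{colA}a}) \leq d_{\JSD}(f,{\color{colA}b'ga}) + d_{\JSD}({\color{colA}b'ga},{\color{colA}b'}{\color{colB}c'hb}{\color{colA}a}).
\]
The first summand is already the first term of the claimed bound, so it suffices to bound the second summand by $d_{\JSD}(g,{\color{colB}c'hb})$. To this end I would note that ${\color{colA}b'ga}$ and ${\color{colA}b'}({\color{colB}c'hb}){\color{colA}a}$ differ only in their middle factor, sharing the pre-composed leg $a$ and the post-composed leg ${\color{colA}b'}$. \Cref{prop:short}, specialised by taking one of its two argument pairs equal on the diagonal (so that term vanishes by reflexivity), shows that pre- and post-composition are each non-expansive. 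Applying this first to strip $b'$ and then to strip $a$ gives
\[
d_{\JSD}({\color{colA}b'ga},{\color{colA}b'}{\color{colB}c'hb}{\color{colA}a}) \leq d_{\JSD}({\color{colA}ga},{\color{colB}c'hb}{\color{colA}a}) \leq d_{\JSD}(g,{\color{colB}c'hb}).
\]
Combining the two displays yields the lemma.

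I do not expect a genuine obstacle: the statement is essentially a repackaging of the metric axioms together with \cref{prop:short}. The only point requiring care is the structural bookkeeping --- choosing ${\color{colA}b'ga}$ as the bridging kernel so that, after cancelling the common outer legs $a$ and ${\color{colA}b'}$ via shortness, the residual comparison isolates $g$ against ${\color{colB}c'hb}$ exactly as the second error term demands. Everything else is an immediate consequence of non-expansiveness and the triangle inequality.
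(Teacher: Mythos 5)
Your proof is correct and takes essentially the same route as the paper's: both bridge $f$ and ${\color{colA}b'}{\color{colB}c'hb}{\color{colA}a}$ via the intermediate kernel ${\color{colA}b'ga}$ using the triangle inequality, and then bound the remaining term $d_{\JSD}({\color{colA}b'ga},{\color{colA}b'}{\color{colB}c'hb}{\color{colA}a})$ by $d_{\JSD}(g,{\color{colB}c'hb})$ via \cref{prop:short}. Your explicit note that \cref{prop:short} is applied with one argument pair on the diagonal (so pre- and post-composition are non-expansive) merely spells out what the paper leaves implicit.
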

\addtocounter{theorem}{-1}
\endgroup

\begin{proof}
\begin{align*}
  d(f,{\color{colA}b'}{\color{colB}c'hb}{\color{colA}a}) &\leq d(f,{\color{colA}b'ga}) 
  + d({\color{colA}b'ga},{\color{colA}b'}{\color{colB}c'hb}{\color{colA}a}) \\
  &\leq d(f,{\color{colA}b'ga}) + d(g,{\color{colB}c'hb}),
\end{align*}
  where the first inequality is the triangle inequality and the last one uses \cref{prop:short}.
\end{proof}

\clearpage
\section{Enriched category theory and Error categories}\label{sec:enriched}

An \emph{enriched} category is a generalization of the concept of category, where the \emph{set} of maps \(x \to y\), \(\mathcal{C}(x,y)\), has been replaced by an object \(\mathcal{C}(x,y) \in \mathcal{V}\) in some other, \emph{enriching}, category.
For example, the set of linear maps \(V \to W\) between two vector spaces can itself be equipped with the structure of a vector space in
a canonical way – this defines an \emph{enrichment} of the category of vector spaces in itself.

A full discussion of enriched categories is beyond the scope of the present article; see \citet{kelly} for a comprehensive introduction.
The present paper contains two examples of enriched categories.

First, \cref{prop:short} shows that the Jensen-Shannon distance defines
an enrichment of \(\FinStoch\) in the category \(\Met\) of metric spaces:

\begin{definition}
  The category \(\Met\) of metric spaces is defined as follows:
  \begin{enumerate}
    \item The objects are metric spaces.
    \item A morphism $(X,d_{X}) \to (Y,d_{Y})$ is a function $f: X \to Y$ which is distance nonincreasing (or ``short''), meaning that $d_{Y}(f(x),f(x')) \leq d_{X}(x,x')$.
    \item Composition is function composition and the identity morphisms are the identity functions.
    \item The tensor product of two metric spaces is $(X,d_{X}) \otimes (Y,d_{Y}) = (X \times Y, d_{X} \otimes d_{Y})$, defined by $(d_{X}\otimes d_{Y})((x,y),(x',y')) = d_{X}(x,x') + d_{Y}(y,y')$.
  \end{enumerate}
\end{definition}
Then a category enriched in metric spaces consists of the following data:
\begin{enumerate}
  \item A category \(\mathcal{C}\)
  \item with a metric $d_{\mathcal{C}(X,Y)}$ on each set of morphisms $\mathcal{C}(X,Y)$ (for each pair of objects $X,Y \in \mathcal{C}$)
  \item so that, if $f,f': X \to Y, h: Y \to Z, g: A \to X$, we have \[d_{\mathcal{C}(X,Z)}(h\circ f,h\circ f'), d_{\mathcal{C}(A,Y)}(f \circ g, f' \circ g) \leq d_{\mathcal{C}(X,Y)}(f,f').\]
        In other words, post- and precomposition are distance nonincreasing maps.
\end{enumerate}

\cref{lemma:jsdcomposes} is essentially a lemma about enriched categories
– the statement and the proof make sense for any category enriched over \(\Met\).

Second, the compositional property of our error measure, \cref{prop:error_composes},
can be phrased to say that error defines an enrichment of \(\FinMod\)
in the following category of \emph{error spaces}.

\begin{definition}[Category of error spaces]
An \emph{error space} \((X,e)\) consists of a set \(X\) and a function \(e: X \to [0,\infty]\).
A morphism of error spaces \(f: (X,e_X) \to (Y,e_Y)\) is a function \(f: X \to Y\) so that \(e_Y(f(x)) \leq e_X(x)\) for all \(x \in X\).
The tensor product \((X,e_X) \tensor (Y,e_Y)\) of two error spaces is \((X \times Y, e_X \tensor e_Y)\), where \(e_X \tensor e_Y(x,y) = e_X(x) + e_Y(y)\).
This data defines a symmetric monoidal category \(\Err\) of error spaces.
\end{definition}

A category enriched in error spaces, or an \(\Err\)-category, then consists of the following data:

\begin{enumerate}
\item A category \(\mathcal{C}\),
\item with an error \(e(f)\) for each morphism \(f\) in \(\mathcal{C}\)
\item such that, when \(f,g\) are composable, \(e(fg) \leq e(f)+e(g)\).
\end{enumerate}

There is an error category of error spaces, \(\underline{\Err}\), where the maps \(f: (X,e_X) \to (Y,e_Y)\) are \emph{all} functions \(f: X \to Y\),
and the error of such a function is \(e(f) := \max(0,\sup_x e_Y(f(x)) - e_X(x))\);
that is, the error is the maximal increase in error created by the function.

In \cref{sec:functors}, we mentioned the notion of a \emph{functor}.
A functor is a mapping between categories preserving the compositional structure:
\begin{definition}[Functor]
  Given categories $\mathcal{C},\mathcal{D}$, a functor $F: \mathcal{C} \to \mathcal{D}$ consists of an object $F(X)$ in $\mathcal{D}$ for each object $X$ in $\mathcal{C}$,
  as well as a morphism $F(f): F(X) \to F(Y)$ for each $f: X \to Y$, so that $F(f\circ g) = F(f) \circ F(g)$ and $F(1_{X}) = 1_{F(X)}$.
\end{definition}

There is also a notion of functor between enriched categories.
Here, we spell this out for the case of \(\Err\)-categories:

A functor of \(\Err\)-categories (or \(\Err\)-functor) \(\mathcal{C} \to \mathcal{D}\)
then consists of the following data:

\begin{enumerate}
\item A functor \(F: \mathcal{C} \to \mathcal{D}\)
\item such that \(e(F(f)) \leq e(f)\).
\end{enumerate}

In particular, an \(\Err\)-functor \(F: \mathcal{C} \to \underline{\Err}\) consists of the following:
\begin{enumerate}
\item For each \(C \in \mathcal{C}\), an error space \(F(C)\),
\item for each map \(f: C \to D\), a function \(F(C) \to F(D)\)
\item such that \(F(f\circ g) = F(f) \circ F(g)\) and
\item such that \(e_{F(D)}(F(f)(x)) \leq e_{F(C)}(x) + e(f)\).
\end{enumerate}

Thus, this captures the desired properties of the collection of implemented models discussed in \cref{sec:functors}.

\bibliography{references}

\end{document}